\DeclareMathOperator*{\argmax}{argmax}
\newcommand{\E}{\mathbb{E}}
\newcommand{\cA}{{\mathcal A}}
\newcommand{\cM}{{\mathcal M}}
\newcommand{\cO}{{\mathcal O}}
\newcommand{\cP}{{\mathcal P}}
\newcommand{\cX}{{\mathcal X}}
\newcommand{\bma}{{\bm a}}
\newcommand{\bmo}{{\bm o}}
\newcommand{\bms}{{\bm s}}
\newcommand{\bmx}{{\bm x}}
\newcommand{\bmy}{{\bm y}}
\newtheorem{example}{Example}
\newtheorem{thm}{Theorem}
\newtheorem{defn}{Definition}
\newenvironment{proof}{\paragraph{Proof:}}{\hfill$\square$}
\title{Structured Q-learning For Antibody Design}
\author{%
  Alexander I. Cowen-Rivers  \\ Technische Universität Darmstadt 
\And
  Philip John Gorinski  \\ Huawei R\&D \And
  Aivar Sootla  \\ Huawei R\&D \And
  Asif Khan  \\ University of Edinburgh \And 
  Furui Liu  \\ Huawei R\&D
\And  Jun Wang  \\ Huawei R\&D  \\ University College London
\And Jan Peters \\ Technische Universität Darmstadt
\And Haitham Bou Ammar \\ Huawei R\&D  \\ University College London
}
\begin{document}

\maketitle
\begin{abstract}
Optimizing combinatorial structures is core to many real-world problems, such as those encountered in life sciences. For example, one of the crucial steps involved in antibody design is to find an arrangement of amino acids in a protein sequence that improves its binding with a pathogen. Combinatorial optimization of antibodies is difficult due to extremely large search spaces and non-linear objectives. Even for modest antibody design problems, where proteins have a sequence length of eleven, we are faced with searching over $2.05 \times 10^{14}$ structures. Applying traditional Reinforcement Learning algorithms such as Q-learning to combinatorial optimization results in poor performance. We propose Structured Q-learning (SQL), an extension of Q-learning that incorporates structural priors for combinatorial optimization. Using a molecular docking simulator, we demonstrate that SQL finds high binding energy sequences and performs favourably against baselines on eight challenging antibody design tasks, including designing antibodies for SARS-COV. 
\end{abstract}
\section{Introduction}

Combinatorial optimization is a general problem faced in many domains, wherein one is tasked with finding an ordered or unordered placement of combinatorial variables with the goal of maximising an objective function.
Combinatorial problems with known objectives include maximum satisfiability (\textit{MaxSAT}) applications such as program synthesis~\cite{programsynthesis}, program verification~\cite{SilvaLM09} and automated theorem proving~\cite{z3solver, NIPS2017_b2ab0019, ntp2}. In engineering, combinatorial problems include finding optimal chip configurations\cite{otten08}, compiler optimization\cite{compiler} and logic synthesis\cite{boils}. In logistics, combinatorial problems include travelling salesman problems~\cite{bello2016neural,kool2018attention,vinyals2015pointer} and solving extremely large linear systems with constraints, commonly referred to as mixed-integer linear programming~\cite{BestuzhevaEtal2021OO, nair2020solving,balcan2018learning,khalil2016learning,gasse2019exact,bengio2020machine}. Lastly, in life sciences combinatorial problems involve core structures such as DNA, mRNA, and proteins, as well as functional molecules represented by SMILES~\cite{smilesrep} or SELFIES~\cite{SELFIES} strings. These fundamental combinatorial structures are central to medical breakthroughs such as the development of proteins to target cancer tumours~\cite{lds032}, as well developing mRNA vaccines for SARS-COV2~\cite{10665-338096}.\looseness=-1
\begin{figure}[th!]
    \centering
    \includegraphics[width=1.0\linewidth]{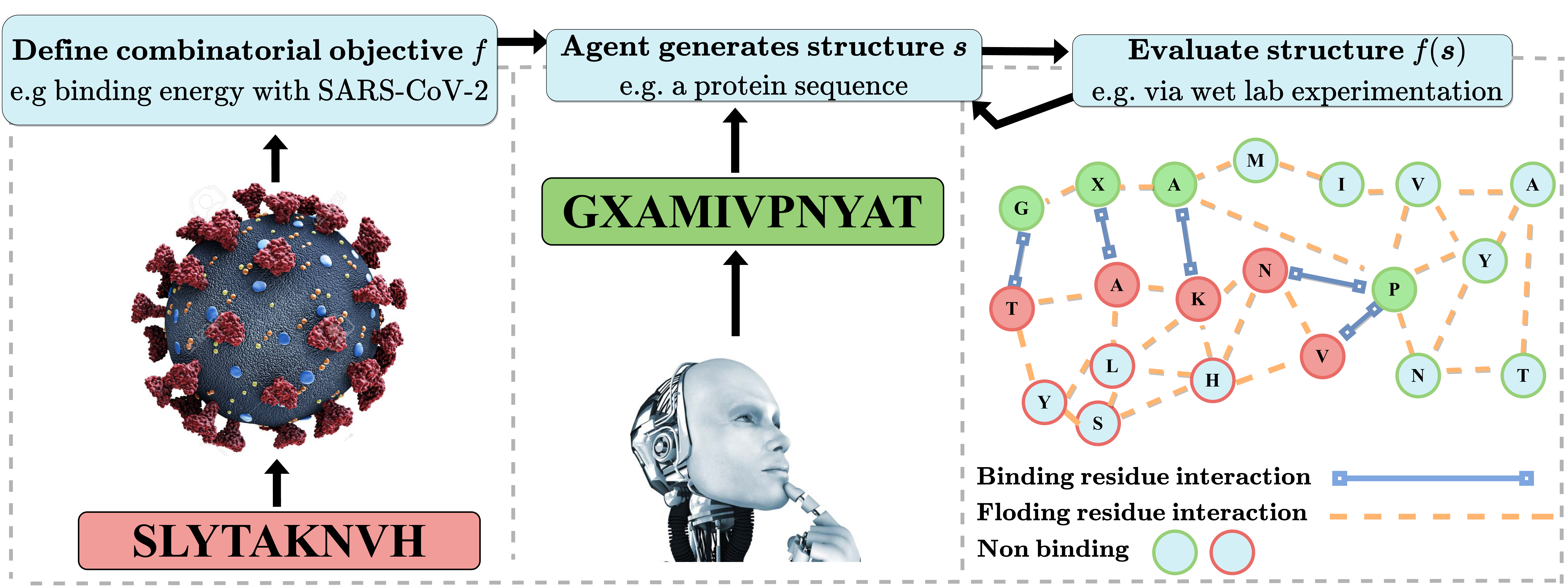}
    \caption{Illustrative example showing combinatorial optimization of an antibody using RL.}
    \label{fig:combtasks}
\end{figure}

\textbf{On the complexity of combinatorial optimization:} Combinatorial optimization is extremely challanging.
Taking \textit{MaxSAT} problems as an example, a well-studied family of combinatorial optimization problems within computational complexity theory,
they only deal with Booelan variables (with cardinality $n=2$), yet are proven to be NP-hard~\cite{CompleteCharacterization,cook1971}. Maximum satisfiability modulo theories (\textit{MaxSMT})~\cite{smtintro} extends \textit{MaxSAT} to handle complex combinatorial variables such as; bit vectors, strings, ordinal and categorical with higher cardinality ($n>>2$). \textit{MaxSMT} better resembles the optimization problems found within life sciences, as structures like DNA, proteins and SELFIES have corresponding cardinality $n=4,n=20,n\approx 500$. \textit{MaxSMT} solvers rely on quantifier elimination algorithms~\cite{Sturm2016SC2SC, smtcad,9700299,CowenRivers2018SummerRR} such as Cylindrical Algebraic Decomposition~\cite{Collins} (CAD). However, CAD's worst case complexity is \textbf{doubly exponential} in the number of variables~\cite{Collins,DAVENPORT198829,HEINTZ1983239} i.e. $\mathcal{O}(2^{2{^{L+1}}})$, highlighting the undesirable difficulty faced in combinatorial optimization \underline{\textit{even when the objective function is known}}.\looseness=-1


\textbf{Motivation:} When there is nothing known about the objective function, one can only \textit{learn} about the inter-relationships between variables and their evaluation. Deep Reinforcement Learning (RL) has seen great success 
in many challanging applications, such as generating solutions to the NP-hard~\cite{Punn07} Travelling Salesman Problem~\cite{BelloPLNB16,kool2018attention} (TSP), GO~\cite{silver2016mastering} as well as controlling plasma~\cite{degrave2022magnetic}. Structural priors have been proposed for policy gradients~\cite{BelloPLNB16,kool2018attention} which improve combinatorial optimization, yet we have not seen structural priors developed for off-policy algorithms such as Q-learning.\looseness=-1


The contributions of this work are three-fold: (i) We introduce \textit{Structured Q-learning (SQL)}, an extension of Q-learning equipped with structural priors such as structure critic targets, structure policy evaluation, structure exploration operator and structure policy improvement. (ii) Using a molecular docking simulator, we show the structural priors introduced allow SQL improves upon its unstructured and on-policy counterparts on a challenging suite of antibody design tasks.
\looseness=-1





\section{Related Work} 

\textbf{The Machine Learning for Combinatorial optimization} (ML4CO)~\cite{ml4co} competition~\footnote{\href{https://www.ecole.ai/2021/ml4co-competition/}{https://www.ecole.ai/2021/ml4co-competition/}} was recently, hosted where competitors provide ML algorithms to improve the efficiency of solving Mixed Integer Programming (MIP) and Mixed Integer Nonlinear Programmng (MINLP) problems, compared to SOTA solvers such as SCIP~\cite{BestuzhevaEtal2021OO}. 
The combinatorial optimization approaches can be classified into two broad categories: those that fully rely on ML models to find a solution, and those that use ML to guide more traditional heuristic search algorithms. Under the algorithmic framework of solving the MIP, the methods in the later class use ML (including RL), to replace or aid a specific process so that the efficiency of problem solving is improved. \cite{DBLP:journals/corr/abs-2102-09663} tackles the problem of finding an initial feasible solution of a complex MIP by RL, in which CNN and MLP networks are used to encode the constraint matrix for informative state representations, and an efficient policy is designed. Other work~\cite{nair2020solving,balcan2018learning,khalil2016learning,gasse2019exact,bengio2020machine} focuses on learning-aided branch-and-bound, and train an RL policy by imitation learning on data obtained by applying a heuristic strong branching approach. Note, we do not compare to ML4CO methods, as they are tailored to MIP and are not applicable to our problem setting.\looseness=-1

Many neural combinatorial optimization methods require large datasets for pre-training, such as Pointer Networks~\cite{vinyals2015pointer} and latent space optimization~\cite{grosnit2021lsbo, griffiths2020constrained,gomez2018automatic, deshwal2021combining,maus2022local,daulton2021multi,tripp2020sample}. When a large, expertly obtained, dataset is not known upfront, methods typically resort to RL~\cite{BelloPLNB16,kool2018attention, ma2019combinatorial, cappart2021combinatorial,boffa2022neural, selsam2018learning, NIPS2014_a14ac55a} to iteratively learn from the objective function. Combinatorial optimization methods using RL are tailored to their application, such as the TSP~\cite{mazyavkina2021reinforcement,BelloPLNB16,kool2018attention}. When faced with optimising a TSP problem, one is given a set of nodes and must select the \emph{permutation} for visiting the nodes that leads to the shortest path. \cite{BelloPLNB16} first designed an RL method with Pointer Networks~\cite{vinyals2015pointer} that selects the permutation of nodes to visit based on a factorised policy. Follow up work~\cite{kool2018attention} made progress on improving the architecture with attention, as well as training methods such as greedy rollouts. 
Although we cannot directly apply these RL methods as our problem is not a permutation problem, we can construct baselines from their core components. \looseness=-1

\textbf{Machine Learning for antibody design}
Several ML/computational approaches for antibody design~\citep{norman2020computational, akbar2021progress} have been developed, either using physics-based antibody and antigen structure modelling~\citep{fiser2003modeller,almagro2014second, leem2016abodybuilder} and docking~\citep{brenke2012application,sircar2010snugdock}, or using ML to learn the non-linear relationship between antibody-antigen binding directly from large sequence/structural datasets~\citep{akbar2021progress}. Methods 
such as~\citep{morea2000antibody,clark2006affinity, clark2009antibody, nimrod2018computational} assume the antigen-antibody is available and predict the affinity directly. 
Various recent generative models have proposed to generate antibody structure from their amino-acid sequences~\citep{amimeur2020designing,eguchi2020ig, shin2021protein,akbar2021silico,shuai2021generative,leem2021deciphering}. More recently~\citep{jin2021iterative} proposed an iterative refinement process to redesign the CDRH3 sequence of antibodies for improving properties such as the neutralising score. The caveat with these methods is that they require pre-training on large datasets. However, the process of collecting large antibody-antigen binding datasets is identified as a challenging problem due to the high cost of simulation~\cite{robert2021ymir,robert2021one,narayanan2021machine,laustsen2021animal}. In this work, we focus on the applicability of RL to designing antibodies from scratch, and thus do not compare to methods that require substantial pre-training.\looseness=-1


\section{Background}

We first define the general combinatorial optimization problem and introduce Reinforcement Learning as well as the Q-learning RL algorithm, which serves as the basis for Structured Q-learning.
\begin{defn}[\emph{Combinatorial optimization}]
Given a finite combinatorial set $\mathcal{X}$, a domain of structures $\mathbb{S}$ consisting of all $L$-tuples of combinatorial variables $\bms \in \mathbb{S} = \mathcal{X}^L$, where $L \in \mathbb{N}$, and an objective function $f:\mathbb{S} \rightarrow \mathbb{R}$, the goal of combinatorial optimization is to find the optimal structure $\bms^{\ast}$ that maximises $f$.
\begin{equation}\label{eq:combopt}
    \begin{array}{cc}
         \bms^{\ast}=\arg \max_{\bms \in\mathbb{S}} f(\bms)=\arg \max_{[\bmx_i]_{i=0}^{L-1}} f([\bmx_i]_{i=0}^{L-1})
    \end{array}
\end{equation}
\end{defn}
In the context of antibody design the objective function $f$ is a molecular docking simulator that takes an antibody protein sequence and evaluates its binding energy (affinity) towards a target antigen. Example~\ref{ex:1} illustrates the combinatorial structure of the antibody sequence space.
\begin{example}\label{ex:1}
For a set $\mathcal{X}$ of amino acids with $\|\mathcal{X}\|=20$ and a target protein of size $L=4$, the individual combinatorial variables are amino acids $x\in \mathcal{X}$, ordered to form a protein structure. Below example $s$ would constitute a valid structure over the combinatorial variables
\begin{equation}\label{example:protein}
    s=[x_i]_{0=1}^{L-1}=[x_0=A,x_1=H,x_2=D,x_3=W]=\text{AHDW}
\end{equation}
\end{example}
\begin{defn}[\emph{Markov Decision Process}]
A Markov Decision Process (MDP) is defined as a tuple $\cM = \langle  \cO, \cA, \cP, r, \gamma_r \rangle$, where $\cO$ is a finite observation space; $\cA$ is a finite action space; $\gamma_r \in (0, 1)$ is the task discount factor; $\cP: \cO \times \cA \times \cO \rightarrow [0, 1]$, i.e., $\bmo_{t+1} \sim p(\cdot | \bmo_t , \bma_t)$; and $r:\cO \times \cA \rightarrow [0, +\infty)$ is the task reward. We can define the optimal value function by $V^\ast(\bmo) = \max_{\pi} \E^\pi_\bmo J_{\rm task}$, where $\E^\pi_{\bmo'}$ is the mean rewards over trajectories from policy $\pi$ starting at observations $\bmo'$. The goal of RL is to find the optimal policy $\pi^{\ast}$ of the MDP:
\begin{equation}\label{prob:vanilla_mdp}
\begin{aligned}
\pi^{\ast}=\max_\bma Q^{\ast}(\bmo,\bma) = \max_\bma \sum_{\bmy \in \mathbb{O}} \hat{\cP}(\bmy \mid \bmo, \bma)\left[r(\bmo, \bma, \bmy) + \gamma V^{\ast}(\bmy) \right]
\end{aligned}
\end{equation}
\end{defn}

We develop a method that builds upon Q-learning~\cite{watkins1992q}, one of the most successful methods for finite action spaces. Q-learning works by first collecting \textit{Random Evaluations} of actions in the MDP, and then cycling between i) updating the policy using \textit{Critic Targets}, ii) \textit{Policy Evaluation}, and iii) \textit{Policy Improvement}, where the critic is updated with the following \textit{Critic Targets}, with learning rate $\alpha$;\looseness=-1
\begin{equation}\label{eq:qlerning}
         Q_{k+1}(\bmo_t,\bma_t) \Longleftarrow  Q_k(\bmo_t,\bma_t) + \alpha (r(\bmo_t, \bma_t) + \gamma\max\limits_{\hat{\bma}}Q(\bmo_{t+1}, \hat{\bma}) -Q(\bmo_t,\bma_t))
\end{equation}

To develop an RL algorithm for flexible structural generation, we want to allow for both sequential generation (of potentially arbitrary length), as well as allocating variables simultaneously to a fixed-length unallocated sequence. Previous works attempt to define similar padded MDP~\cite{mazyavkina2021reinforcement}, however we believe our MDP formalism to be the first.\looseness=-1
\begin{defn}[\emph{Variable Allocation Markov Decision Process}]
A variable allocation Markov Decision Process (VAMP) is defined as a tuple $\cM_{\text{CO}} = \langle  \hat{\cO}, \hat{\cA}, \hat{\cP}, \hat{r}, \gamma_{\text{CO}} \rangle$, with observations containing both combinatorial variables and masked variables $\hat{\mathbb{S}}= \hat{\cX}^L$, where $\hat{\cX} = \cX \cup {\textbf{MASK}}$, with $\mathbb{S} \subset \hat{\mathbb{S}}$, actions $\hat{\cA} =\cX$. We define a deterministic transition function given as: 
\begin{equation}
    \begin{array}{cc}
         \hat{\cP}\left(\bmo_t=[[\bma_i]_{i=0}^t, [\textbf{MASK}]_{t+1}^T],\bma_t, \bmo_{t+1}=[[\bma_i]_{i=0}^{t+1}, [\textbf{MASK}]_{t+2}^T]\right)=1
    \end{array}
\end{equation}
The set of starting observations~\footnote{If there is context to the optimization problem, we can concentrate it to $\bmo_0$.} being $[\textbf{MASK}]_{i=0}^{T}$. Lastly, the reward function $\hat{r}$ is defined below, where $[\cdot]$ defines a concatenation operator.
\begin{equation}
    \begin{array}{cc}
         \hat{r}(\bmo_t,\bma_t,\bmo_{t+1}) =\begin{cases}
             0  & \text{if } t < L-1   \\
             f([\bma_i]_{i=0}^{L-1}) & \text{if } t = L - 1
       \end{cases} 
    \end{array}
\end{equation}
\end{defn}


\begin{example}
We can now show how the same protein from~\ref{example:protein} can be constructed under VAMP. 
\begin{equation}
    \begin{array}{ll}
        o_{4} = \hat{\cP}(\hat{\cP}(\hat{\cP}(\hat{\cP}(o_0=[\textbf{MASK}]_{0}^{3}], a_0=\text{A}),a_1=\text{H}),a_2=\text{D}),a_3=\text{W})=\text{AHDW}
    \end{array}
\end{equation}
\end{example}

Note, VAMP global optima is the same global optima of the underlying function (Theorem~\ref{thrm:1}). 

\begin{thm}\label{thrm:1}
    For combinatorial optimization with discount factor $\gamma_{\text{CO}}=1$ under any objective function $f$, the optimal policy $\pi^\ast$ finds a solution $\bms^\ast$ such that $f(\bms^\ast)  = \arg \max_{\bms \in\mathbb{S}} f(\bms)$. 
\end{thm}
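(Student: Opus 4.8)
The plan is to collapse the reinforcement learning objective on the VAMP $\cM_{\text{CO}}$ onto the combinatorial objective $f$ by exploiting two structural features of the process: its transition kernel $\hat{\cP}$ is deterministic, and its reward $\hat{r}$ is sparse, being nonzero only at the terminal step $t = L-1$. Under $\gamma_{\text{CO}} = 1$ these two facts force the episodic return of \emph{every} trajectory to coincide exactly with $f$ evaluated on the structure that trajectory builds, after which maximizing the value function is literally maximizing $f$. I read the (slightly informal) statement as the pair of claims $V^\ast(\bmo_0) = \max_{\bms \in \mathbb{S}} f(\bms)$ and that the greedy structure under $\pi^\ast$ is $\bms^\ast = \arg\max_{\bms \in \mathbb{S}} f(\bms)$.

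First I would establish a bijection between admissible trajectories and structures. Starting from $\bmo_0 = [\textbf{MASK}]_{i=0}^{L-1}$, the deterministic transition guarantees that at step $t$ the chosen action $\bma_t \in \hat{\cA} = \cX$ writes a combinatorial variable into position $t$ and leaves the later positions masked, so an action sequence $(\bma_0,\dots,\bma_{L-1})$ produces the unique terminal observation $\bmo_L = [\bma_i]_{i=0}^{L-1}$, and conversely each $\bms = [\bmx_i]_{i=0}^{L-1} \in \mathbb{S} = \cX^L$ is realized by $\bma_t = \bmx_t$. Hence the set of reachable structures is precisely $\mathbb{S}$. Because $\gamma_{\text{CO}} = 1$ and $\hat{r}$ vanishes for $t < L-1$, the return along such a trajectory is
\begin{equation}
\sum_{t=0}^{L-1} \gamma_{\text{CO}}^{t}\, \hat{r}(\bmo_t, \bma_t, \bmo_{t+1}) = f\big([\bma_i]_{i=0}^{L-1}\big) = f(\bms),
\end{equation}
so the value of the initial observation under any policy $\pi$ is $V^\pi(\bmo_0) = \E^\pi_{\bmo_0}[f(\bms_\pi)]$, the expected objective value of the structure $\pi$ produces.

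Finally I would invoke the definition of the optimal value function, $V^\ast(\bmo_0) = \max_\pi \E^\pi_{\bmo_0} J_{\rm task}$. Since $\cM_{\text{CO}}$ is deterministic, the return of a trajectory depends only on the realized action sequence, so the value of any stochastic policy is a convex combination of the values $f(\bms)$ of the deterministic sequences in its support; by linearity of expectation this cannot exceed the best such value, and the maximum is attained by a deterministic policy committing to a single $\bms$. Combined with reachability of all of $\mathbb{S}$, this yields $V^\ast(\bmo_0) = \max_{\bms \in \mathbb{S}} f(\bms)$, and the structure rolled out by $\pi^\ast$ satisfies $f(\bms^\ast) = \max_{\bms \in \mathbb{S}} f(\bms)$, i.e. $\bms^\ast = \arg\max_{\bms \in \mathbb{S}} f(\bms)$.

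The only genuine subtlety, and hence the main obstacle, is the reduction from arbitrary randomized policies to a single deterministic structure selection: I must rule out that stochasticity in either the policy or the dynamics could yield strictly higher expected return than the best fixed structure. This is exactly where determinism of $\hat{\cP}$ is essential, since it guarantees the per-trajectory return is well-defined independently of the environment's randomness, letting the linearity-of-expectation bound close the argument; the remaining bijection and telescoping steps are routine.
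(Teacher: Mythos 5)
Your proof takes essentially the same route as the paper's: both collapse the episodic return to $f([\bma_0,\dots,\bma_{L-1}])$ using the sparse terminal reward and $\gamma_{\text{CO}}=1$, then identify maximization over policies with maximization over structures. Yours is simply a more careful version, explicitly supplying the trajectory--structure bijection and the reduction from stochastic policies to deterministic structure selection that the paper's one-line chain of equalities leaves implicit.
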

\begin{proof}
See Appendix~\ref{thrm:1proof}.
\end{proof}
\section{Structured Q-learning}\label{sec:sql} 

With the above relevant background, we introduce our proposed algorithm \textbf{Structured Q-learning}. SQL is an off-policy RL algorithm for combinatorial optimization that consists of four components that introduce structural priors: \textit{structure critic targets}, \textit{structure policy evaluation}, a \textit{structure exploration operator} $\phi$, and \textit{structure policy improvement}, as shown in Algorithm~\ref{alg:sql} in the Appendix.\looseness=-1

\textbf{Random Structure Evaluations}\label{sec:rstruct} In the first step, we sample random structures $\bms$ and evaluate them in the environment to obtain rewards for $f(\bms)$ for the full structures. The observed pairs $\{\bms^{(i)}, f(\bms^{(i)})\}$ are saved in a buffer (\text{StructBuffer}) and used as initial examples to train SQL's structure critics.\looseness=-1

\textbf{Structure Critic Targets}\label{sec:targets} 
We train a structure critic on the observed structure-reward pairs obtained from the previous step. For a given sequence $\bms$, the critic's objective is to learn to predict the expected reward $f(\bms)$ directly. This step can be achieved similarly to the update for the factorised policy (shown in the Appendix in  Equation~\ref{eq:policyloss}), whereby the same training signal, that is, the true reward observed for the complete structure, is propagated at each step of combinatorial variable prediction:\looseness=-1
\begin{equation}\label{eq:critic_learn}
    \begin{array}{ll}
        & \mathcal{S}([\bma_i]_{i=1}^t) \Longleftarrow  \mathcal{S}([\bma_i]_{i=1}^t) + \alpha \cdot (f([\bma_i]_{i=1}^L) -\mathcal{S}([\bma_i]_{i=1}^t))
    \end{array}
\end{equation}
\textbf{Structure Policy Evaluation}\label{sec:spe}
A critic is trained to take structures and predict their objective function values; we can perform policy evaluations in order to determine what the exploitative (greedy) structure $\bms^\ast=\arg \max_{\bms \in\mathbb{S}} \mathcal{S}(\bms)$ should be for the next policy improvement step. For this purpose, we use the critic to generate a new structure used to evaluate the objective function. In general, generation can be done either sequentially or simultaneously.\looseness=-1

\emph{\textbf{Sequential Generation}} constructs the greedy sequence from the trained structure critic one step at a time, at each step choosing the output that maximises the sequence score up to this point:
\begin{equation}\label{eq:seq_gen}
    \begin{array}{cc}
     \argmax_{\bms \in\mathbb{S}} \mathcal{S}(\bms) = \argmax_{\bma_L \in \mathcal{X}} \mathcal{S}(\dots \argmax_{\bma_2 \in \mathcal{X}} \mathcal{S}([ \argmax_{\bma_1 \in \mathcal{X}} \mathcal{S}(\bma_1), \bma_2])\dots, \bma_L]) 
\end{array}
\end{equation}
\begin{figure}[th!]
    \centering
    \includegraphics[width=1.0\linewidth]{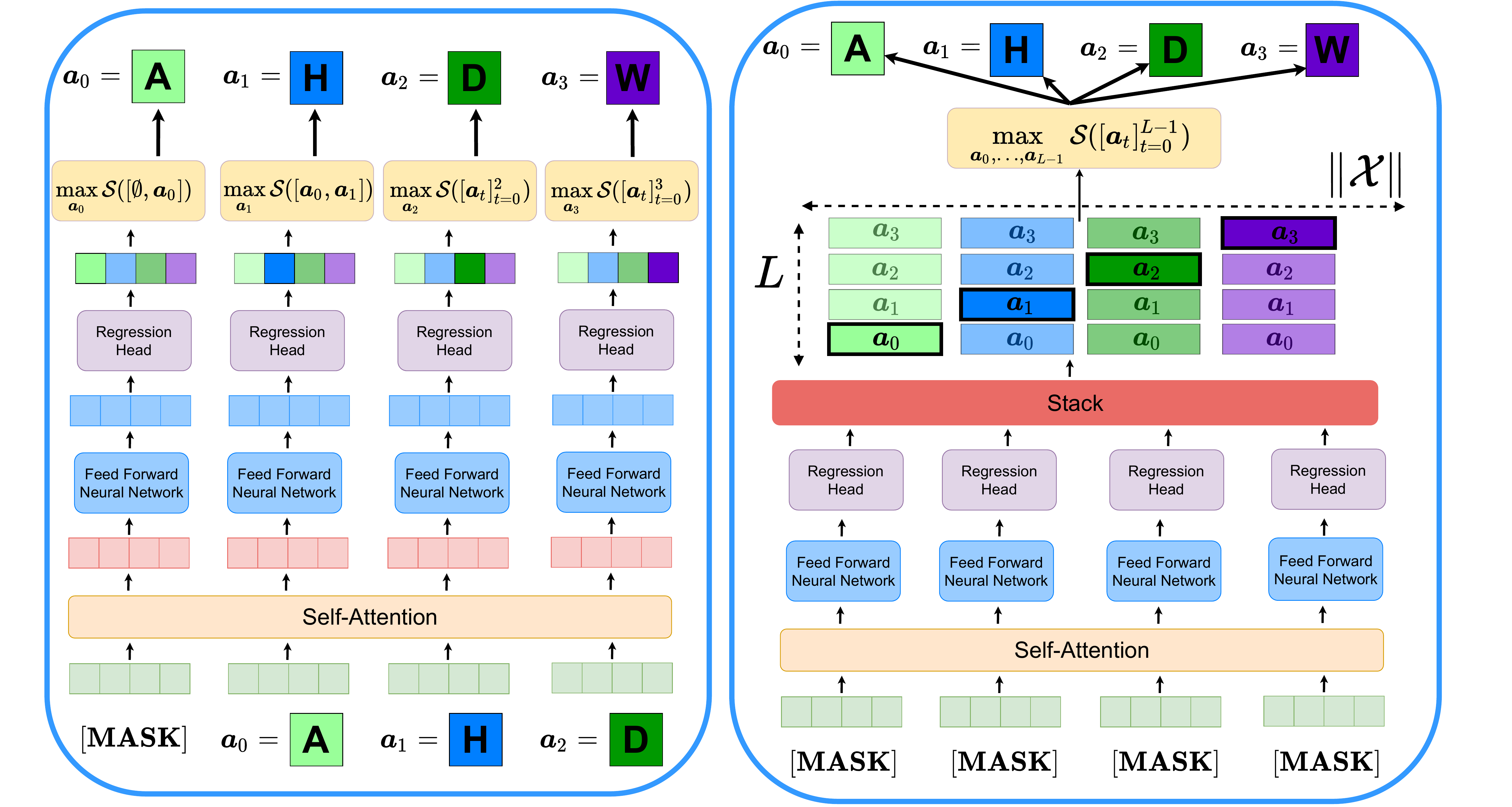}
    \caption{We illustrate two structure policy evaluation strategies implemented with a Transformer. A sequential strategy (Greedy) is shown on the left,  and a non-sequential strategy (Masked) shown on the right. Note, both structure policy evaluation strategies take the same input sequence $[\textbf{MASK}]_0^3$ and arrive at the same output sequence $\textbf{AHDW}$ through different computation graphs.}
    \label{fig:spi}
\end{figure}
\noindent\emph{\textbf{Simultaneous Generation}}, rather than iteratively performing step-wise policy evaluations, formulates the generation procedure to directly produce the highest-value sequence, given the current critic:
\begin{equation}\label{eq:simul_gen}
    \begin{array}{cc}
     \arg \max_{\bms \in\mathbb{S}} \mathcal{S}(\bms) = \arg \max_{\bma_1 \in \mathcal{X}, \ldots, \bma_L \in \mathcal{X}} \mathcal{S}([\bma_1, \ldots, \bma_L])
\end{array}
\end{equation}
The choice of using a Transformer~\cite{vaswani2017attention} network for the Structure Critic allows for straight-forward implementation of either generation method, illustrated in Figure~\ref{fig:spi}: In the sequential case, we train the SQL Critic in a similar fashion to autoregressive Language Models such as GPT\cite{brown2020language} with a diagonal attention mask, preventing ``lookahead'' in the attention mechanism. When aiming for simultaneous generation, the Critic is instead trained similarly to non-autoregressive Masked Language Models like BERT~\cite{devlin2018bert} or RoBERTa~\cite{liu2019roberta}, enabling it to predict multiple variables at the same time. \looseness=-1



\textbf{Structure Exploration Operator $\bm{\Phi(\cdot)}$}\label{sec:operator} 
In order to explore new structures during training, we introduce the structure exploration operator $\Phi(\cdot)$. To acquire a random structure for exploration, we uniformly sample a structure $\bms^{(i)}$ from \text{StructBuffer}. We then apply the \textit{replace} operation, which uniformly chooses a substructure $\bmx_j$ and replaces it with another $\hat{\bmx}$ uniformly sampled from $\hat{\bmx}\sim\mathcal{X}$. Note, there are many other available implementations for the operator $\bm{\Phi(\cdot)}$ e.g. performing crossover between structures, however exploring all operators is out of scope for this work. We define exploitative \emph{structure} as $\bms^{\ast}$, and random (exploration) structure as $\hat{\bms}$. \looseness=-1

\textbf{Structural Policy Improvement}\label{sec:spi}
Policy improvement trades off when to explore or exploit. We can generalise these exploration strategies to structures. In order to do so, we define a criterion $p(\textbf{accept}=\bms^{\ast})$ for selecting either a greedy or a random structure. We propose 3 exploration strategies for $p(\textbf{accept}=\bms^{\ast})$, adapted to operate on structures.\looseness=-1

\emph{e-greedy:}
An epsilon-greedy (e-greedy) criterion for exploration entails accepting an exploitative action $\bma$ with probability $p(\textbf{accept}=\bma)=1-\epsilon_{g}(n)$, where $\epsilon_{g}(n) \in [\epsilon_{\text{min}}, 1]$ is a decaying function of steps $n$. For structures, we use the same criterion for accepting an exploitative \emph{structure} $\bms^{\ast}$.\looseness=-1

\emph{$\mathcal{S}$-greedy} We introduce a novel criterion based on a secondary structure critic ($\mathcal{S}_2(\cdot)$). We do so by comparing the predicted values of $\mathcal{S}_2$ for a random structure $\hat{\bms}$ and an exploitative structure $\bms^{\ast}$. If $\mathcal{S}_2(\bms^{\ast}) > \mathcal{S}_2(\hat{\bms})$ then $p(\textbf{accept}=\bms^{\ast})=1$, otherwise $p(\textbf{accept}=\bms^{\ast})=e^{\mathcal{S}_2(\hat{\bms})-\mathcal{S}_2(\bms^{\ast})}$.\looseness=-1

\emph{Sampling:} Rather than exploring greedily, we can sample actions from a stochastic policy with probability $p(\bma \mid \bmo) = \frac{\exp^{\mathcal{S}(\bmo,\bma)}}{\sum_{\hat{\bma} \in \mathcal{X}} \exp^{\mathcal{S}(\bmo,\hat{\bma})}}$. This will favour choosing structures with high rewards, but allow for exploration. We fix $p(\textbf{accept}=\bms^{\ast})=1$, as the stochastic policy takes care of exploration.\looseness=-1




\section{Experiments and Results}

\begin{figure}[th!]
    \centering
    \includegraphics[width=0.220\linewidth]{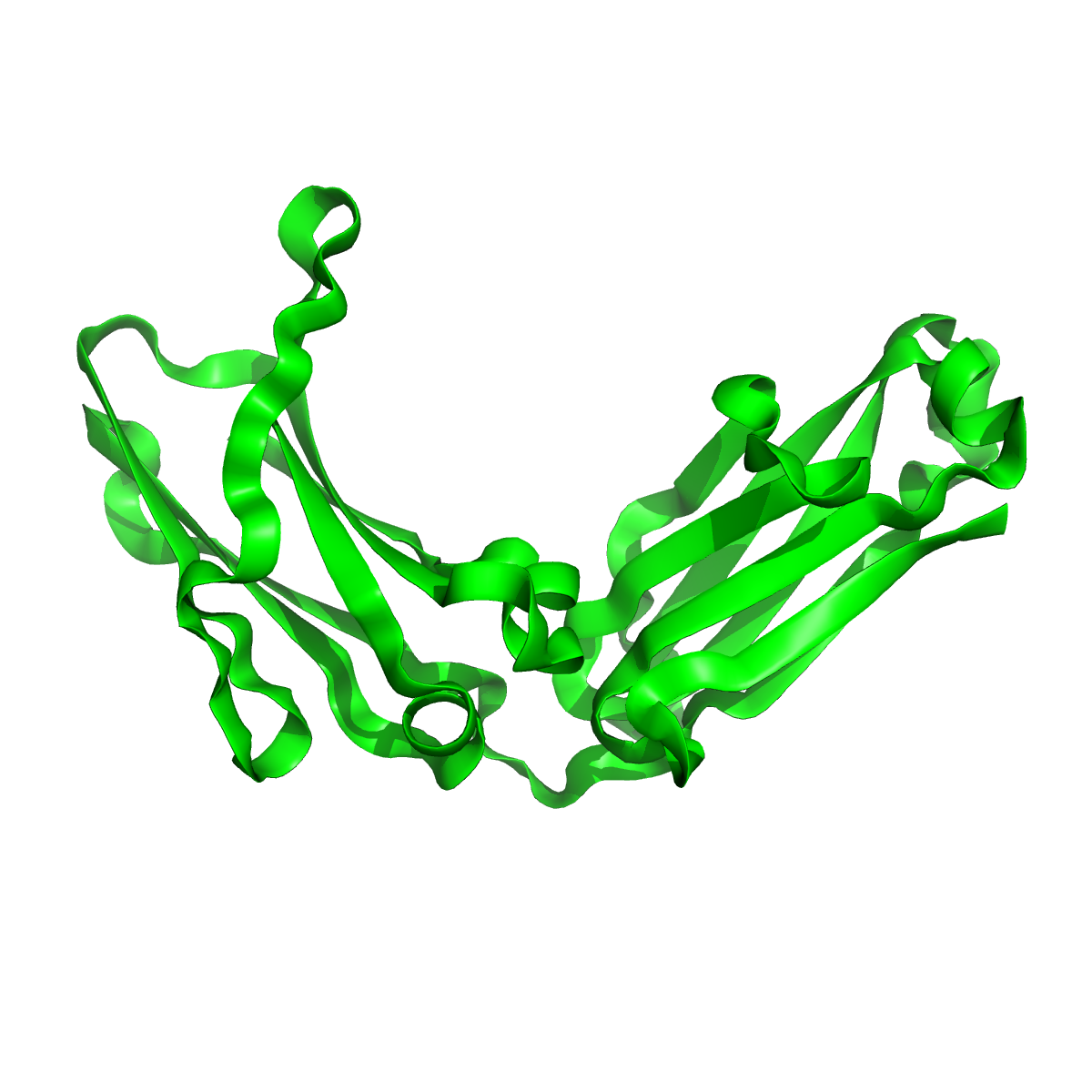}
    \includegraphics[width=0.220\linewidth]{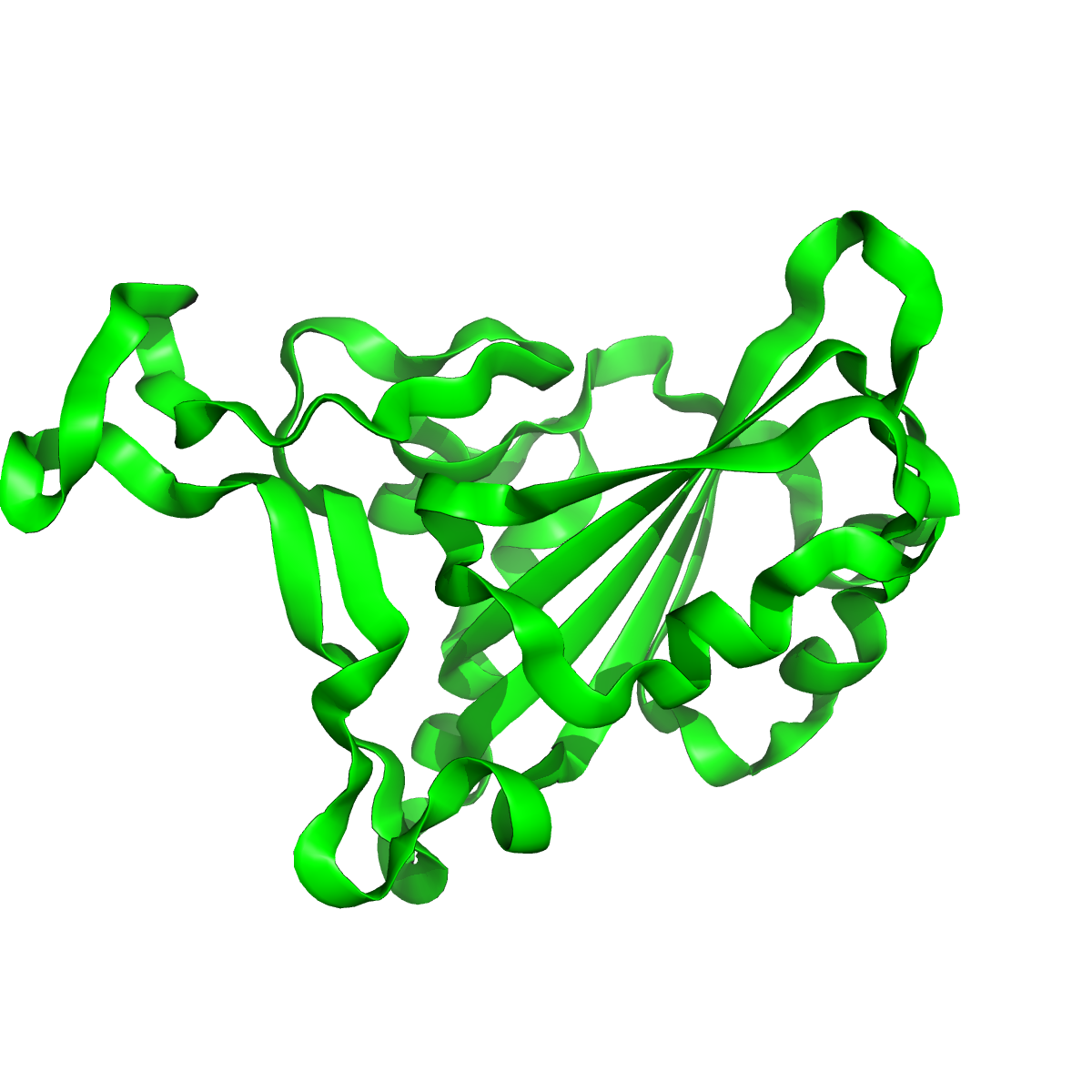}
    \includegraphics[width=0.220\linewidth]{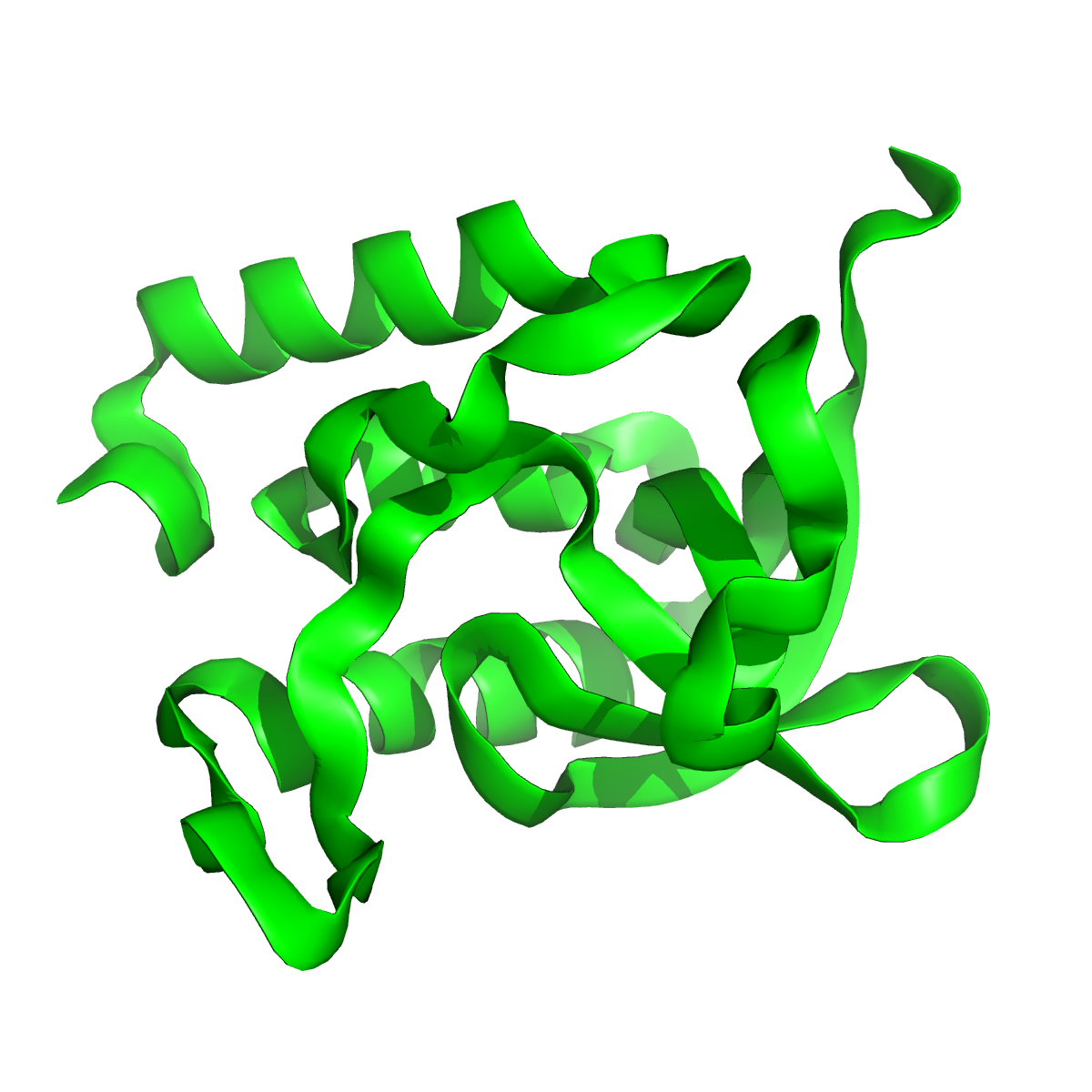}
    \includegraphics[width=0.220\linewidth]{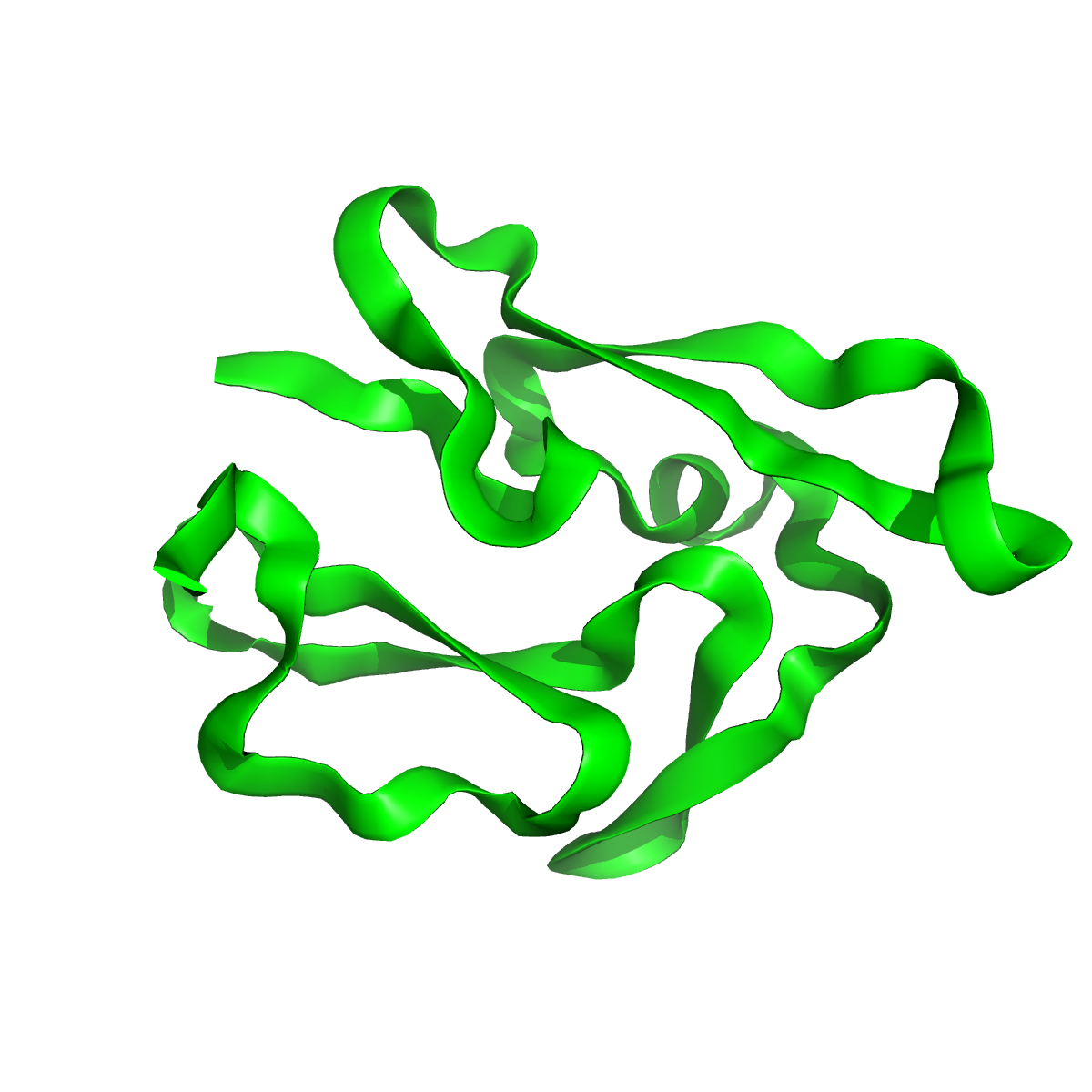}
    \includegraphics[width=0.220\linewidth]{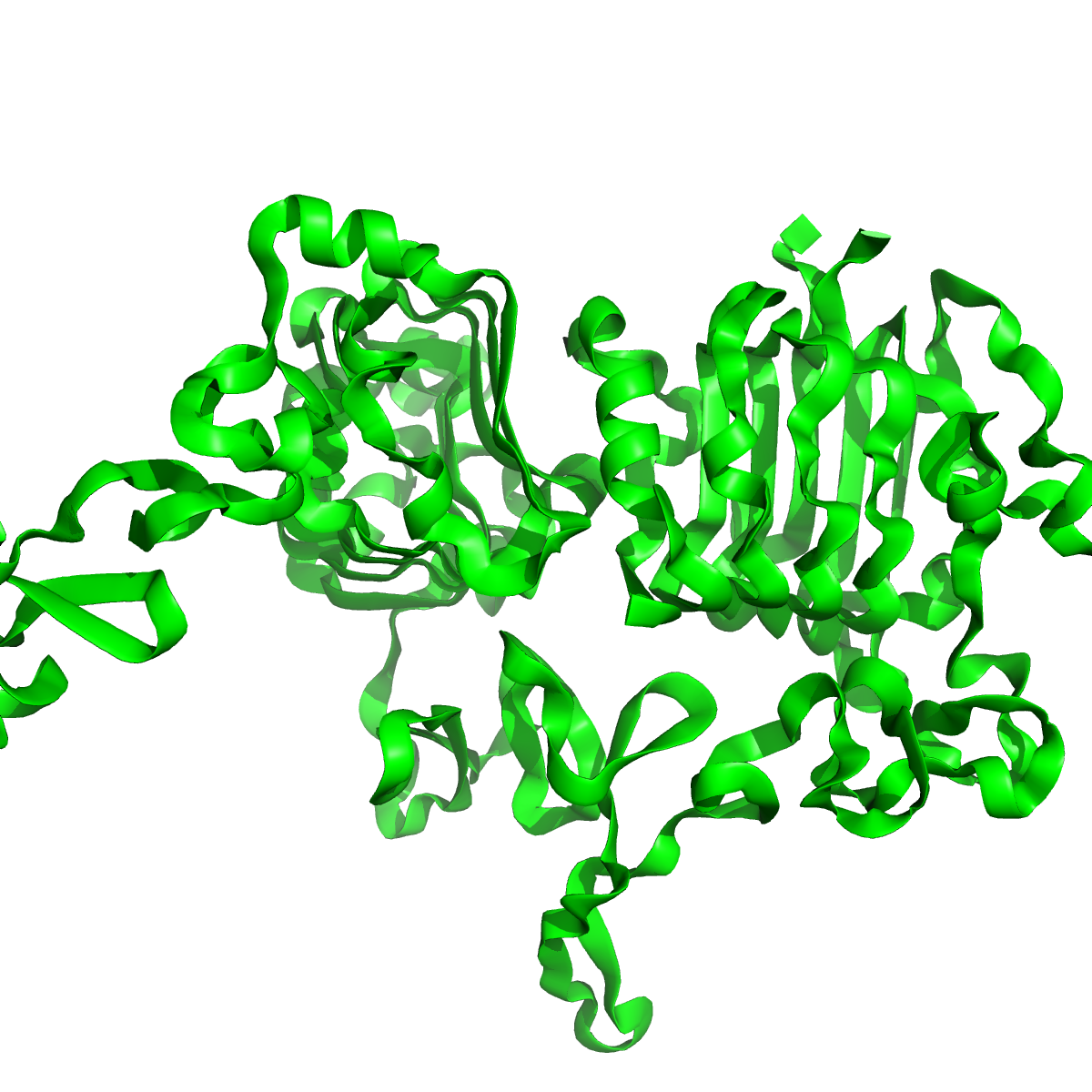}
    \includegraphics[width=0.220\linewidth]{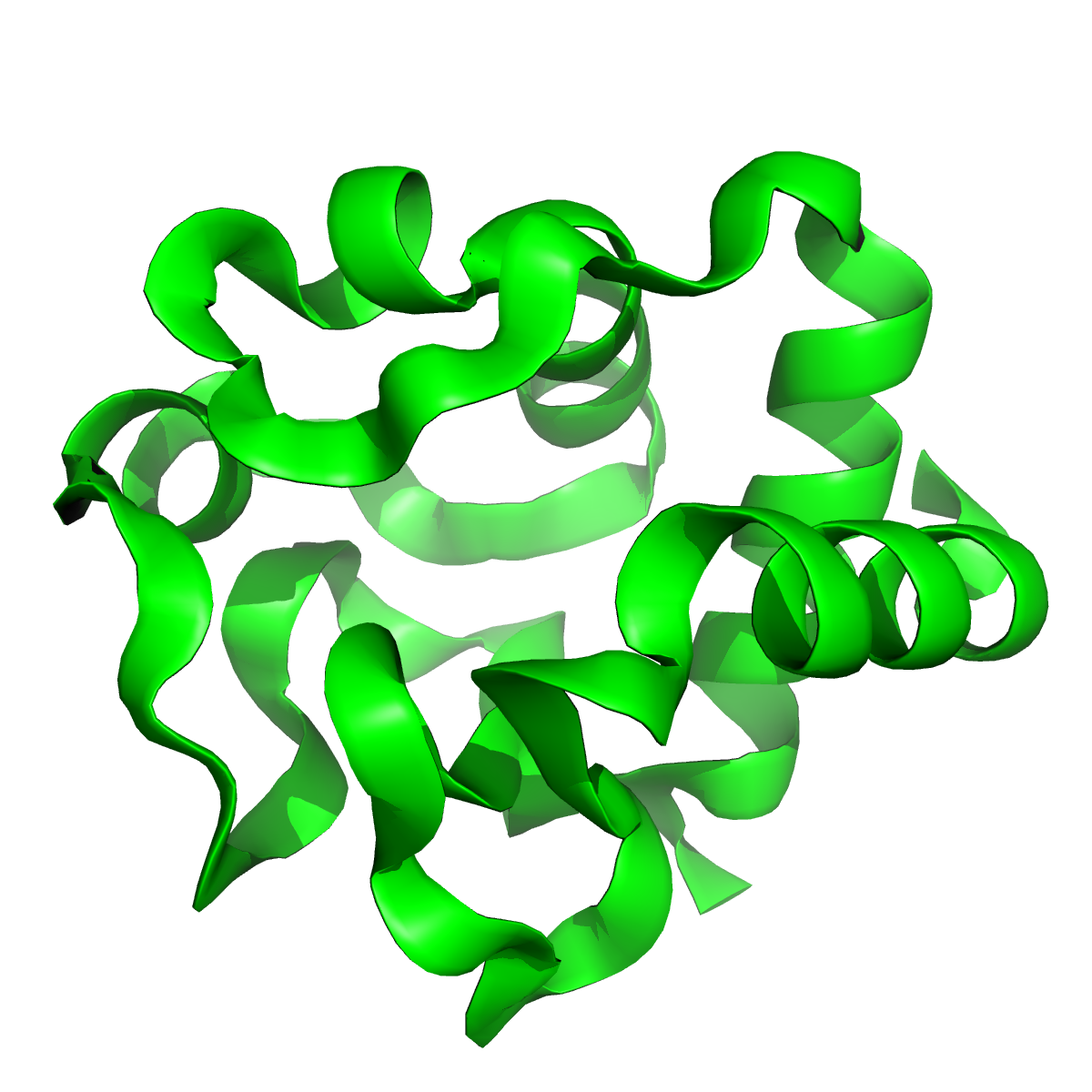}
    \includegraphics[width=0.220\linewidth]{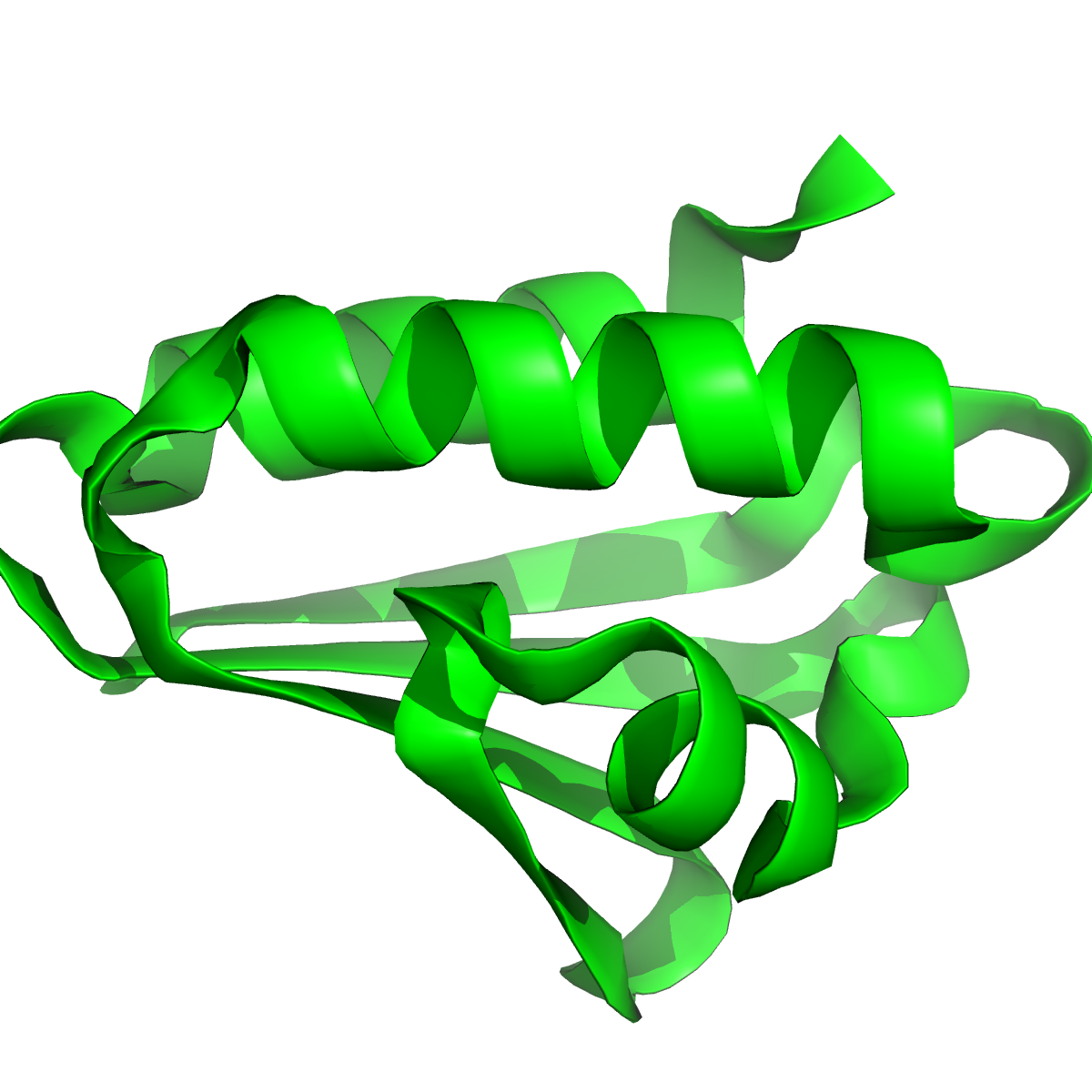}
    \includegraphics[width=0.220\linewidth]{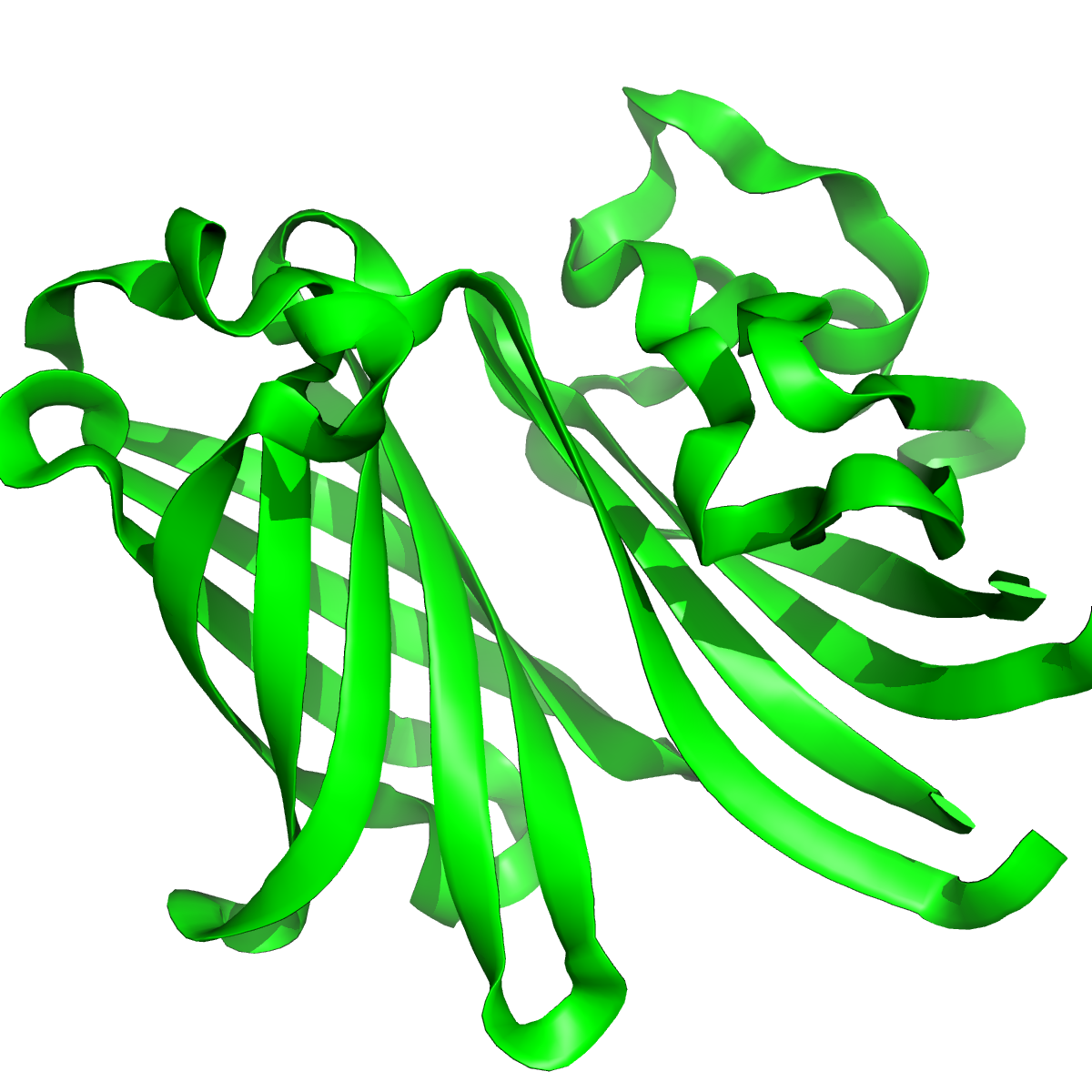}
    \caption{3D structures of the eight target pathogens selected for antibody tasks. From top left to bottom right; 1ADQ\_A, 2DD8\_S, 1NSN\_S, 1OB1\_C, 1S78\_B, 1WEJ\_F, 2JEL\_P , 2YPV\_A.}
    \label{fig:proteintasks}
\end{figure}
First, we introduce antibody design and the molecular docking simulator. Next, we explain our experimental setup and baselines, and lastly, we present the analysis and discussion of the results. \looseness=-1

\subsection{Experiment Task and Setup}
\textbf{Antibody design problem:}
Antibodies are large Y-shaped molecules that bind with the antigen at the tip of their variable region~\cite{Chothia1987}. The CDRH3 protein sequence located on the tip of a variable region of the antibody plays a vital role in determining its binding specificity due to its structural diversity~\cite{Chothia1987, xu2000diversity}. Therefore, one goal of the antibody design process can be finding a protein sequence in the CDRH3 region that would lead to an optimal binding site. An example of a binding site between two proteins is illustrated in the right image in Figure~\ref{fig:combtasks}. In our work, we use Absolut!~\cite{robert2021one} to simulate the molecular docking (binding), which computes a lattice view of molecular representation from a sequence and evaluates its binding towards the antigen. We view the Absolut! docking simulator as the objective function, and our goal is to search for a protein sequence, where each character is one of twenty unique Amino Acids ($|\mathcal{X}|=20$), otherwise referred to as AA's, that minimizes the binding energy with a target pathogen. Note, the maximum size of input proteins for docking simulation by Absolut! is $11$, resulting in an extremely large search space of $|\mathbb{S}|= 2.05 \times 10^{14}$. We demonstrate the applicability of RL methods to solve the above combinatorial optimization problem.\looseness=-1

\textbf{Antibody design tasks:} We choose eight highly varied target pathogens to optimise binding energy because of their broad interest in several studies~\cite{robert2021one,akbar2021silico}. The eight antigens are IGG4 Fc Region (1ADQ\_A), SARS-COV Virus Spike glycoprotein (2DD8\_S), SNASE: Staphylococcal nuclease complex (1NSN\_S), MSP1: Merozoite Surface Protein 1 (1OB1\_C), HER2: Receptor protein-tyrosine kinase erbB-2 (1S78\_B), CYC: Cytochrome C (1WEJ\_F), ptsH: Phosphocarrier protein HPr (2JEL\_P) and fHbp: factor H binding protein (2YPV\_A). The first four characters are Protein Data Bank (PDB) followed by a Chain ID. The chain ID identifies a protein responsible for the entering of a pathogen in a host cell. Figure~\ref{fig:proteintasks} shows the 3D structures of the target pathogens.\looseness=-1  


\textbf{SQL and Baseline Settings} We evaluate two sequential structured policy evaluation strategies with SQL using beam search with k=1 (Greedy) and k=20 (Beam). We also evaluate a non-sequential strategy (Masked). For structured policy improvement we use $\mathcal{S}$-greedy, chosen from experimental results (see ablation results \ref{subsec:results}). We compare to two  variations (Critic \&  MaxB) of a SOTA RL algorithm for combinatorial optimization (with structural priors), adapted to our setting from~\cite{BelloPLNB16, kool2018attention}, which we named Structured Policy Gradients (SPG). See Appendix~\ref{sec:spg} for details of SPG.  Baselines include popular combinatorial optimization algorithm simulated annealing~\cite{bertsimas1993simulated} (SA), random search (RS), and the unstrustured counterparts of SQL and PG: Q-learning (QL), and policy gradients (PG).\looseness=-1

\textbf{Implementation details} 
In order to ensure we only compare effects from structural priors, all methods use the exact same transformer architecture trained with batch size 32, Adam~\cite{kingma2014adam} optimiser with learning rate 0.001. The only differing factor
between methods is the manner in which the Transformer is utilised (see Appendix~\ref{app:arcs} for exact details of neural architectures). Each trial completed on one Intel Xeon E5 CPU with 200GB RAM. \looseness=-1

\begin{figure}[th!]
\includegraphics[width=1\linewidth]{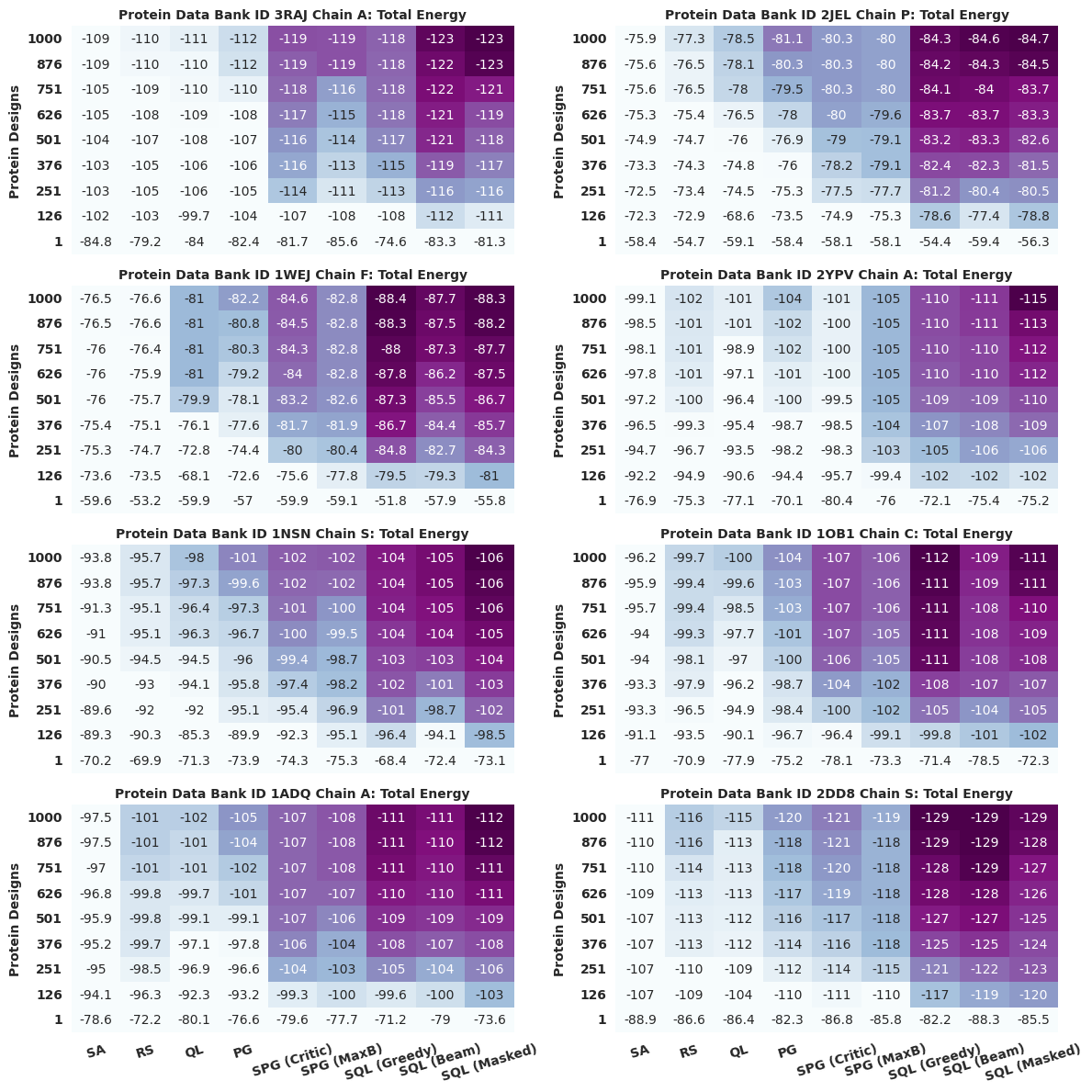}
\caption{Binding energy performance of baselines and our methods for eight antibody design tasks, averaged across $10$ random trials per task. Lower (darker shade) energy is better. 
}
\label{fig:absoluttable}
\end{figure}

\subsection{Results and Discussion}
\label{subsec:results}

Figure~\ref{fig:absoluttable} shows heatmap results of the eight antibody design tasks. Each block is a unique antibody task (see Figure~\ref{fig:proteintasks}). Each column is a method, with each cell being the binding energy averaged over $10$ seeds, while the y-axis shows the number of protein designs over time. We colour each cell based on how much it improves upon Simulated Annealing's final average performance, with darker colours highlighting increased improvement.\looseness=-1

We see SQL variants being the top performing amongst off- and on-policy agents. Generally, Q-learning performs worse than policy gradients however, when comparing their combinatorial counterparts (SQL vs SPG), we see this trend reversed. Generally the greedy baseline helps SPG versus using a critic, consistent with~\cite{kool2018attention}. Increasing the beam size from k=1 (Greedy) to k=20 (Beam) for SQL improves performance slightly however, the highest performing variant is the only non-sequential method evaluated -- SQL (Masked). We believe this shows the promise of non-sequential generation of proteins, where the model learns to directly generate the best performing structures.\looseness=-1

\begin{table}[t!]
\setlength\tabcolsep{1.85pt}
\small
\centering
\scalebox{0.9}{\begin{tabular}{l|ccccccc}
\toprule
\diagbox[innerwidth=3.0cm,font=\footnotesize]{Quantiles}{\textbf{Antigens}}                & \textbf{1ADQ\_A} & \textbf{1NSN\_S} & \textbf{1OB1\_C} & \textbf{1WEJ\_F} & \textbf{2YPV\_A} & \textbf{3RAJ\_A} & \textbf{2JEL\_P} \\ \hline
\textbf{> All 6.9M}   & -108.53          & -107.16          & -108.78          & -86.03           & -114.47          & -116.74          & -86.05           \\
\textbf{>0.01\%} & -102.62          & -99.09           & -101.56          & -79.15           & -103.86          & -107.95          & -79.53           \\
\textbf{>0.1\%}  & -98.51           & -94.85           & -97.45           & -76.56           & -99.92           & -104.5           & -75.82           \\
\textbf{>1\%}    & -94.19           & -89.99           & -92.52           & -73.24           & -95.18           & -100.3           & -71.41           \\
\textbf{>5\%}    & -90.03           & -85.46           & -88.07           & -70.41           & -91.16           & -96.49           & -67.84           \\
\textbf{>95\%}   & -52.71           & -46.64           & -46.64           & -37.21           & -53.62           & -53.97           & -38.63          \\
\bottomrule
\end{tabular}}
\caption{Energy quantiles of 6.9M experimentally obtained antibody sequences (all sub-sequences of length 11) available from Absolut database~\cite{robert2021one}. 2DD8\_S not reported due to missing data.}
\label{table:thresholds}
\end{table}

\begin{figure}[th!]
    \centering
    \includegraphics[width=\linewidth]{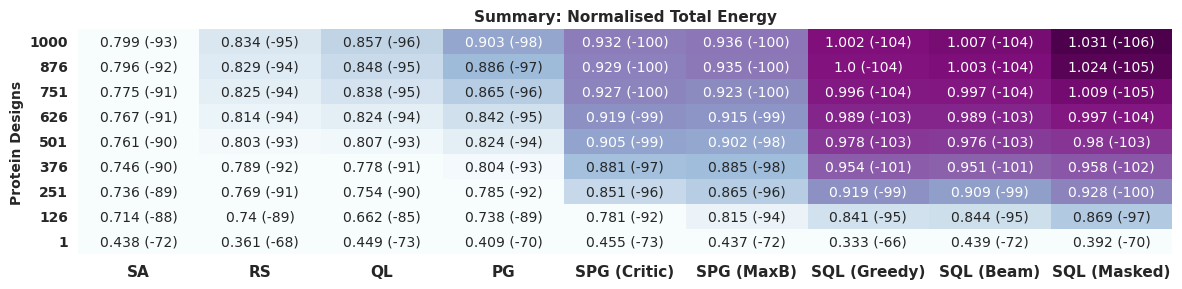}
    \caption{Heat map of normalised (using Table~\ref{table:thresholds}) binding energy averaged across antigen tasks and random seeds. We observe SQL (Masked) ranks as the best method and only SQL variants, on average, achieve better energy scores ($>1.0$) than all 6.9M biologically obtained proteins.}
    \label{fig:absolutsummary}
\end{figure}

\begin{wrapfigure}{R}{0.4\linewidth}
  \centering
  \vspace*{-3ex}
    \includegraphics[width=\linewidth]{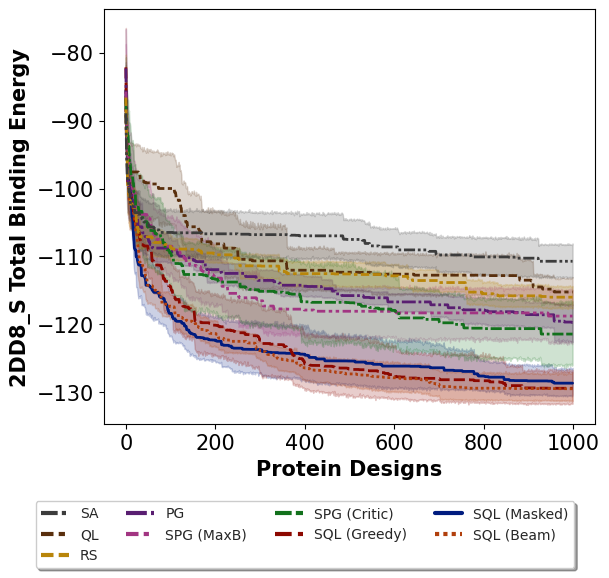}
    \caption{SARS-COV antibody task.}
    \label{fig:sarscov}
    \vspace*{-3ex}
\end{wrapfigure}

We are also interested in how different methods perform with respect to a known database of solutions~\cite{robert2021one}. \cite{akbar2021silico} used quantiles of energy distribution from a known database of 6.9 million (6.9M) biologically obtained proteins to categorise them as low, high, very high, and super binders (top 5/1/0.1/0.01\%), based on their binding towards an antigen. This allows for analysing how good the ML methods are in designing sequences over known binding categories. The thresholds of the categories for seven~\footnote{Not all eight tasks due to missing data for 2DD8\_S.} antigens are reported in Table~\ref{table:thresholds}. In combination with Figure~\ref{fig:absoluttable}, we can see that SQL reaches high to very high affinity in much fewer evaluations compared to baselines. Moreover, it consistently outperforms the best known sequence available in the database within $1000$ evaluations.\looseness=-1

Figure~\ref{fig:absolutsummary} further displays the average normalised energy across all random trials and seven antigen tasks~\footnote{Except 2DD8\_S due to the missing quantiles needed for normalisation.}. We normalise using the tail ends (best and worse) of the energy distribution acquired from~\citep{robert2021one}. A normalised score of $>1.0$ means that on average, the method finds antibody sequences with better energy scores than all 6.9M biologically obtained protein sequences. Importantly, \textit{only SQL variants} are able to reach the tremendously difficult average normalzied score of $\geq1.0$. In all cases, we see that structural priors improve performance vs the unstructured equivalent models (QL vs SQL) and (PG vs SPG).\looseness=-1

As an example to illustrate the efficacy of SQL for the challenging optimization of antibody designs, Figure~\ref{fig:sarscov} shows convergence plots of the tested methods on SARS-COV. We can see that while SQL variants obtain the best energy scores, they do so both fastest and with the lowest standard deviation. This observation can be generalised across all tasks (Appendix Figures~\ref{fig:table_var} and~\ref{fig:pdb_all}).\looseness=-1

Analysing the results from all seeds and methods on chain A of 1ADQ, we discovered over $300$ unique protein sequences reaching the best binding energy score $(-112.59)$. This highlights the complexity of the problem and how so many related local minima exist. Importantly, $>99\%$ (all bar three) top performing proteins were found by SQL,
showing that SQL is able to consistently find diverse and well performing antibodies when optimising for binding energy over a molecular simulator. See Appendix Figure~\ref{fig:msa} for additional analysis.\looseness=-1

\begin{figure}[th!]
    \centering
    \includegraphics[width=0.4\linewidth,trim={0.8cm 0.4cm 1.9cm 1.4cm},clip]{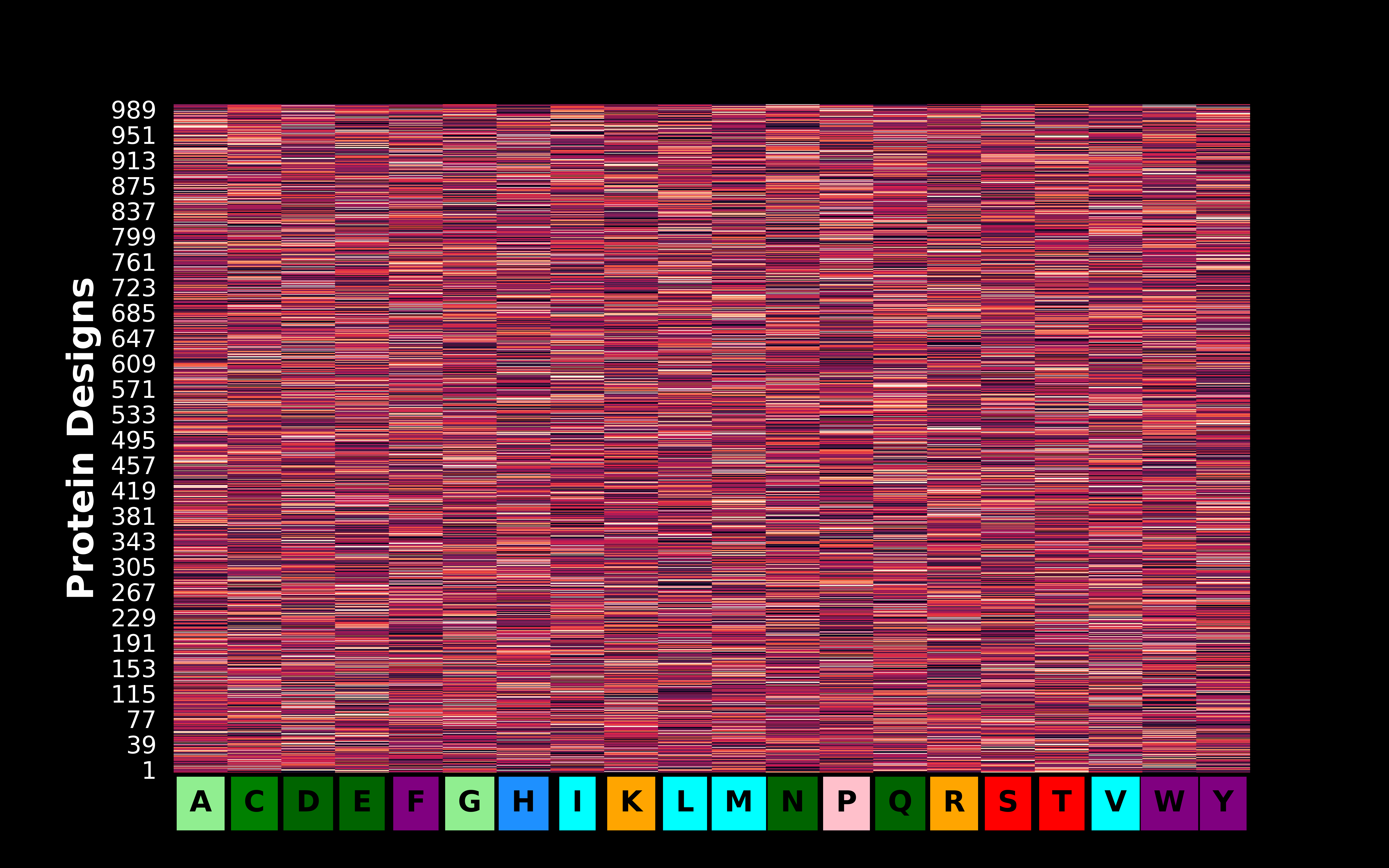}
    \includegraphics[width=0.4\linewidth,trim={0.8cm 0.4cm 1.9cm 1.4cm},clip]{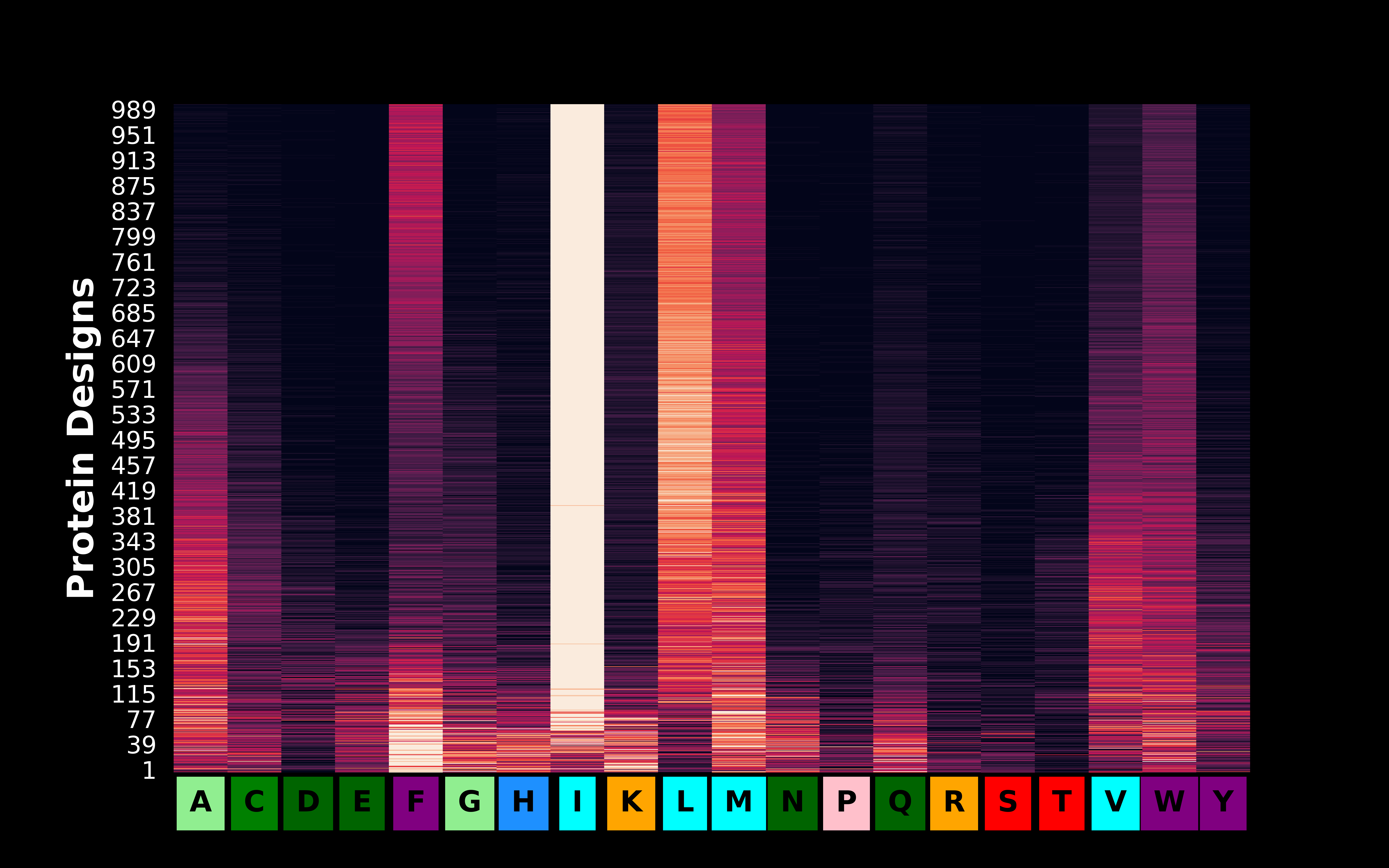}
     \includegraphics[width=0.4\linewidth,trim={0.8cm 0.4cm 1.9cm 1.4cm},clip]{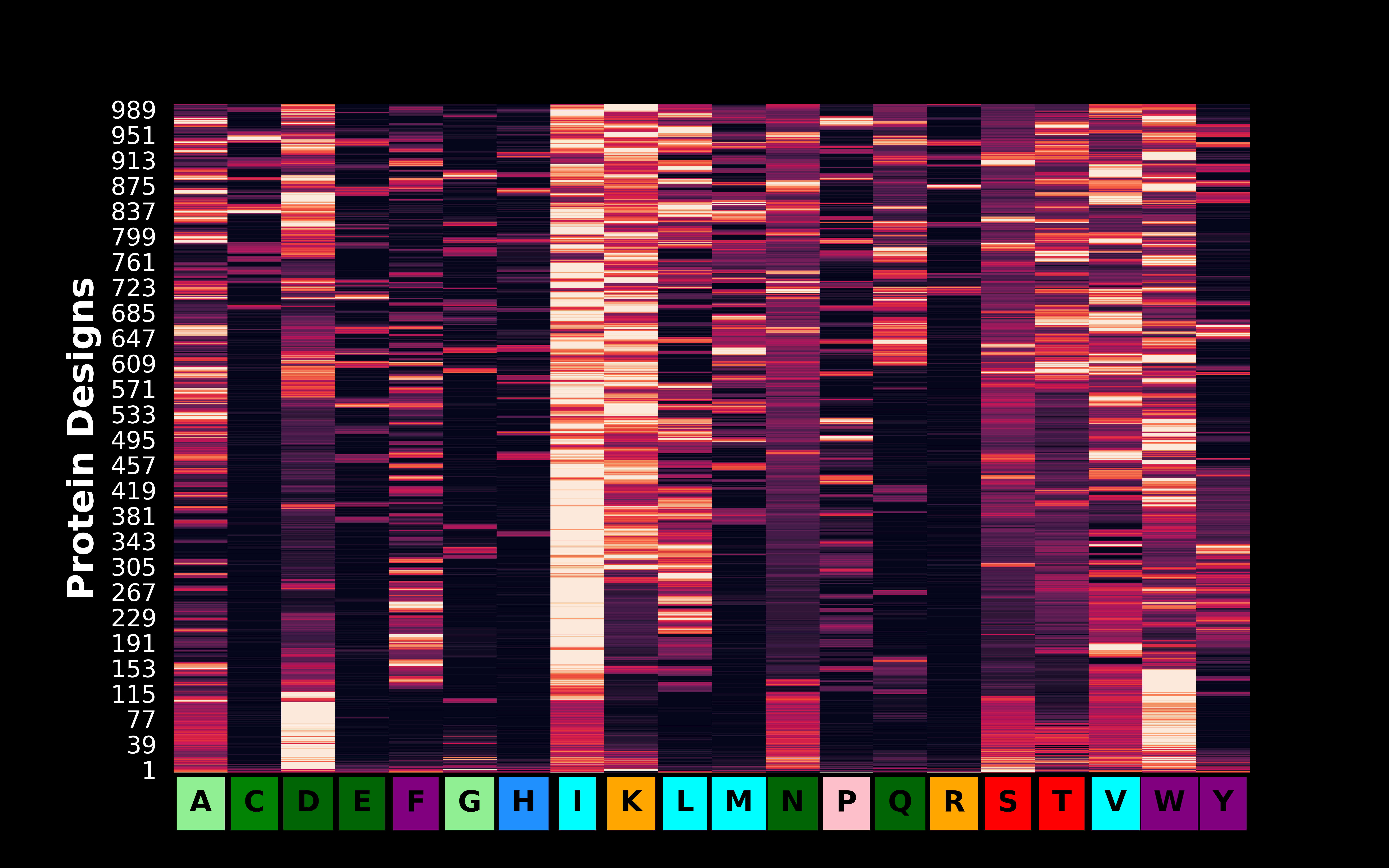}
    \includegraphics[width=0.4\linewidth,trim={0.8cm 0.4cm 1.9cm 1.4cm},clip]{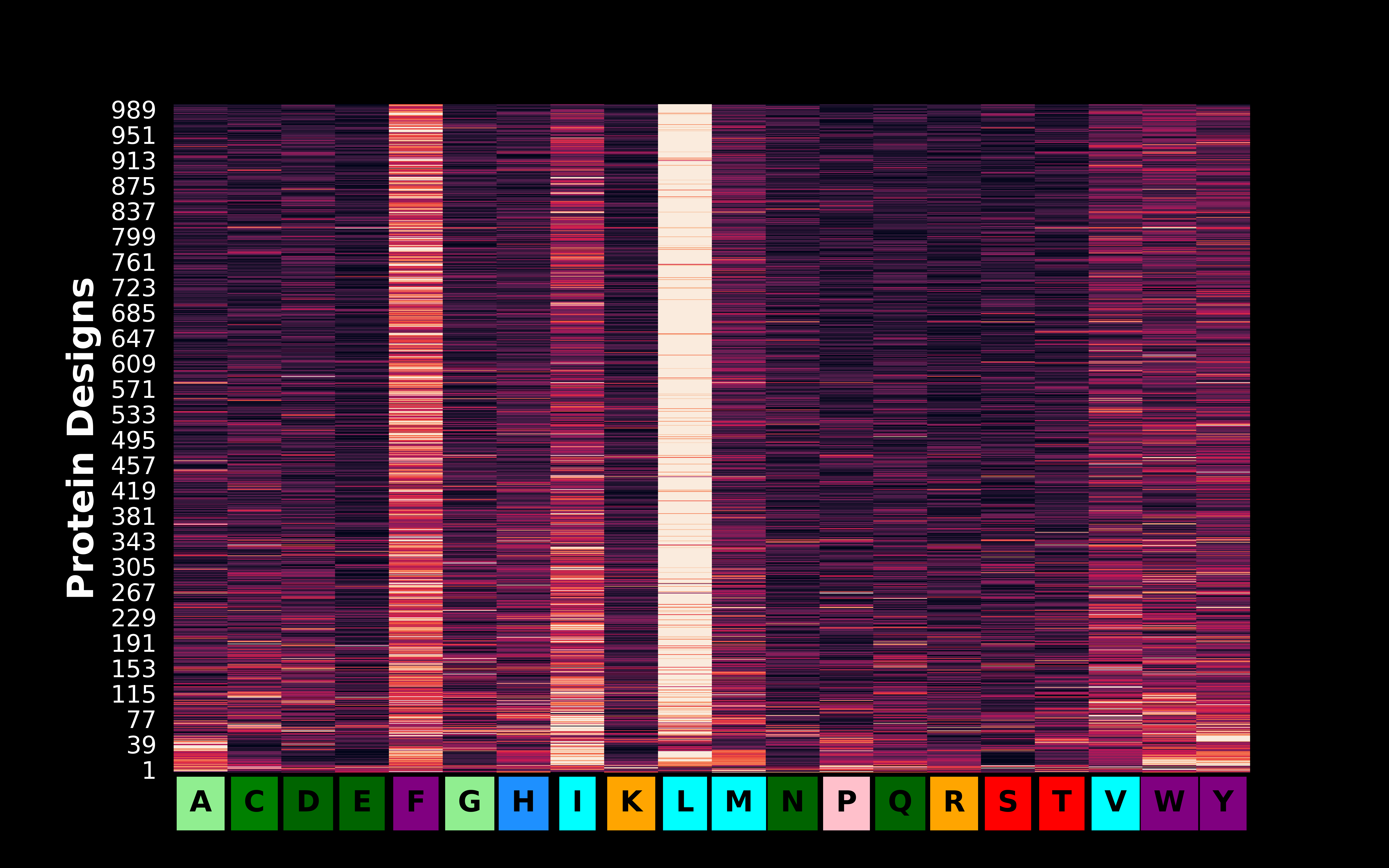}
    \caption{Heatmap of proportions of AA's selected for SARS-COV (2DD8\_S), across 10 seeds per step. The brighter the colour, the higher the proportion. From top left to bottom right: RS, SPG (Critic), QL, SQL. Importantly, we see SQL both focusing on high energy AA's, as demonstrated in Figure~\ref{fig:sarscov}, whilst continuing diverse exploration, as shown by bright shading almost everywhere.}
    \label{fig:diveristy}
\end{figure}

\begin{wrapfigure}{R}{0.4\linewidth}
  \centering
  \vspace*{-2ex}
    \includegraphics[width=\linewidth,trim={0 0 0 8.2cm},clip]{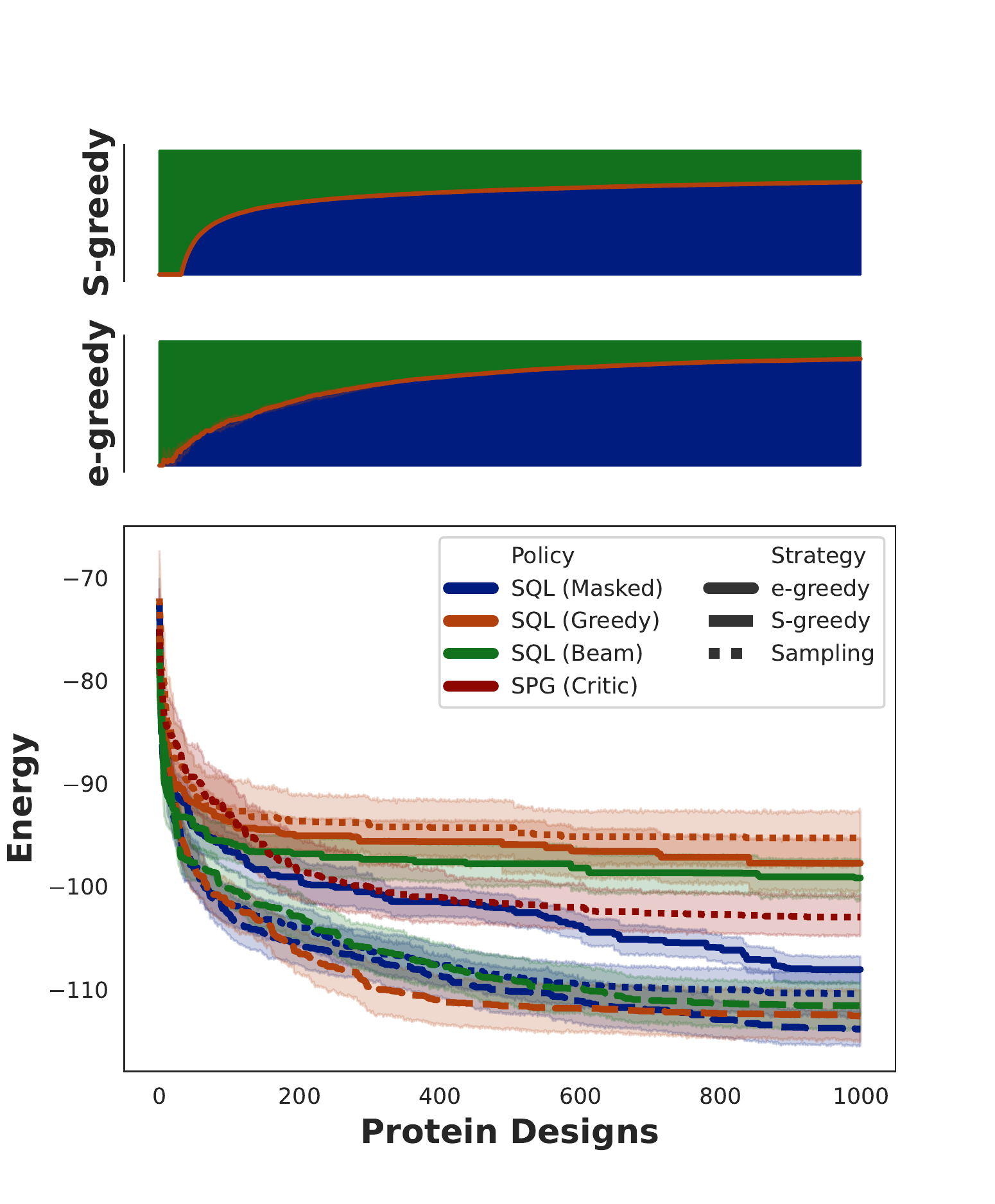}
    \caption{Structural priors ablation.}
    \label{fig:ablation}
    \vspace*{-1ex}
\end{wrapfigure}

This diversity in solutions is further illustrated in Figure~\ref{fig:diveristy}, which traces the amino acid selection across the optimization process for differing agents on SARS-COV over all seeds. For each amino acid (x-axis), we show the proportion it is selected at each structure suggestion step (y-axis). Both SQL and QL show diverse traces across the task; while QL seems to continue to explore, SQL eventually focuses on exploiting variations of a few core amino acids that provide excellent energy scores. SPG seems to exploit early on, which could explain the under-performance against SQL, which even during the exploitation phase is constantly exploring other solutions.\looseness=-1

In the final experiment, we show the influence of varying the policy evaluation and improvement strategies. For each evaluation strategy (greedy, beam search, masked) we run every possible improvement strategy ($\epsilon$-greedy, $\mathcal{S}$-greedy, sampling\footnote{For Beam we do not include sampling exploration as it can not be implemented together.}). We compare to SPG (Critic), the overall next best performing non-SQL method in previous experiments. We run this ablation study for 20 seeds for each method (Figure~\ref{fig:ablation}). It seems the best improvement strategy for all evaluation strategies is $\mathcal{S}$-greedy, while performance decreases with $\epsilon$-greedy and sampling. Interestingly for masked policy evaluation, sampling seems to quite significantly outperform $\epsilon$-greedy.\looseness=-1


\section{Conclusion and Future Work}
We have introduced Structured Q-learning, an extension to classic Q-learning with structural priors. We have demonstrated the effectiveness of SQL on the combinatorial domain of antigen construction. Using a molecular docking simulator we evaluated SQL on optimising protein sequences to bind to various target pathogens, where we observed it significantly improves upon existing RL agents. Importantly, all learning algorithms use the same neural architecture, differing only in how this architecture is utilised. In the future we would like to extend SQL to other combinatorial domains, as well as similarly adapt other off-policy Reinforcement Learning methods for combinatorial optimization.\looseness=-1

\newpage

\bibliographystyle{ieeetr}
\bibliography{mlprotein}

\begin{thebibliography}{10}

\bibitem{programsynthesis}
Y.~Feng, R.~Martins, O.~Bastani, and I.~Dillig, ``Program synthesis using
  conflict-driven learning,'' {\em SIGPLAN Not.}, vol.~53, p.~420–435, jun
  2018.

\bibitem{SilvaLM09}
J.~P.~M. Silva, I.~Lynce, and S.~Malik, ``Conflict-driven clause learning sat
  solvers.,'' in {\em Handbook of Satisfiability} (A.~Biere, M.~Heule, H.~van
  Maaren, and T.~Walsh, eds.), vol.~185 of {\em Frontiers in Artificial
  Intelligence and Applications}, pp.~131--153, IOS Press, 2009.

\bibitem{z3solver}
L.~de~Moura and N.~Bj{\o}rner, ``Z3: An efficient smt solver,'' in {\em Tools
  and Algorithms for the Construction and Analysis of Systems} (C.~R.
  Ramakrishnan and J.~Rehof, eds.), (Berlin, Heidelberg), pp.~337--340,
  Springer Berlin Heidelberg, 2008.

\bibitem{NIPS2017_b2ab0019}
T.~Rockt\"{a}schel and S.~Riedel, ``End-to-end differentiable proving,'' in
  {\em Advances in Neural Information Processing Systems} (I.~Guyon, U.~V.
  Luxburg, S.~Bengio, H.~Wallach, R.~Fergus, S.~Vishwanathan, and R.~Garnett,
  eds.), vol.~30, Curran Associates, Inc., 2017.

\bibitem{ntp2}
P.~Minervini, M.~Bosnjak, T.~Rockt{\"{a}}schel, and S.~Riedel, ``Towards neural
  theorem proving at scale,'' {\em CoRR}, vol.~abs/1807.08204, 2018.

\bibitem{otten08}
R.~H. J.~M. Otten, ``Layout synthesis,'' in {\em Handbook of Algorithms for
  Physical Design Automation} (C.~J. Alpert, D.~P. Mehta, and S.~S. Sapatnekar,
  eds.), Auerbach Publications, 2008.

\bibitem{compiler}
R.~C. Lozano, M.~Carlsson, G.~H. Blindell, and C.~Schulte, ``Combinatorial
  register allocation and instruction scheduling,'' {\em ACM Transactions on
  Programming Languages and Systems (TOPLAS)}, vol.~41, no.~3, pp.~1--53, 2019.

\bibitem{boils}
A.~Grosnit, C.~Malherbe, R.~Tutunov, X.~Wan, J.~Wang, and H.~Bou-Ammar,
  ``Boils: Bayesian optimisation for logic synthesis,'' {\em CoRR},
  vol.~abs/2111.06178, 2021.

\bibitem{bello2016neural}
I.~Bello, H.~Pham, Q.~V. Le, M.~Norouzi, and S.~Bengio, ``Neural combinatorial
  optimization with reinforcement learning,'' {\em arXiv preprint
  arXiv:1611.09940}, 2016.

\bibitem{kool2018attention}
W.~Kool, H.~van Hoof, and M.~Welling, ``Attention, learn to solve routing
  problems!,'' in {\em International Conference on Learning Representations},
  2019.

\bibitem{vinyals2015pointer}
O.~Vinyals, M.~Fortunato, and N.~Jaitly, ``Pointer networks,'' {\em Advances in
  neural information processing systems}, vol.~28, 2015.

\bibitem{BestuzhevaEtal2021OO}
K.~Bestuzheva, M.~Besan{\c{c}}on, W.-K. Chen, A.~Chmiela, T.~Donkiewicz, J.~van
  Doornmalen, L.~Eifler, O.~Gaul, G.~Gamrath, A.~Gleixner, L.~Gottwald,
  C.~Graczyk, K.~Halbig, A.~Hoen, C.~Hojny, R.~van~der Hulst, T.~Koch,
  M.~L{\"u}bbecke, S.~J. Maher, F.~Matter, E.~M{\"u}hmer, B.~M{\"u}ller, M.~E.
  Pfetsch, D.~Rehfeldt, S.~Schlein, F.~Schl{\"o}sser, F.~Serrano, Y.~Shinano,
  B.~Sofranac, M.~Turner, S.~Vigerske, F.~Wegscheider, P.~Wellner, D.~Weninger,
  and J.~Witzig, ``{The SCIP Optimization Suite 8.0},'' technical report,
  Optimization Online, December 2021.

\bibitem{nair2020solving}
V.~Nair, S.~Bartunov, F.~Gimeno, I.~von Glehn, P.~Lichocki, I.~Lobov,
  B.~O'Donoghue, N.~Sonnerat, C.~Tjandraatmadja, P.~Wang, {\em et~al.},
  ``Solving mixed integer programs using neural networks,'' {\em arXiv preprint
  arXiv:2012.13349}, 2020.

\bibitem{balcan2018learning}
M.-F. Balcan, T.~Dick, T.~Sandholm, and E.~Vitercik, ``Learning to branch,'' in
  {\em ICML}, 2018.

\bibitem{khalil2016learning}
E.~B. Khalil, P.~Le~Bodic, L.~Song, G.~L. Nemhauser, and B.~N. Dilkina,
  ``Learning to branch in mixed integer programming,'' in {\em AAAI}, 2016.

\bibitem{gasse2019exact}
M.~Gasse, D.~Ch{\'e}telat, N.~Ferroni, L.~Charlin, and A.~Lodi, ``Exact
  combinatorial optimization with graph convolutional neural networks,'' {\em
  arXiv preprint arXiv:1906.01629}, 2019.

\bibitem{bengio2020machine}
Y.~Bengio, A.~Lodi, and A.~Prouvost, ``Machine learning for combinatorial
  optimization: a methodological tour d’horizon,'' {\em European Journal of
  Operational Research}, 2020.

\bibitem{smilesrep}
D.~Weininger, ``Smiles, a chemical language and information system. 1.
  introduction to methodology and encoding rules,'' {\em J. Chem. Inf. Comput.
  Sci.}, vol.~28, p.~31–36, feb 1988.

\bibitem{SELFIES}
M.~Krenn, F.~H{\"{a}}se, A.~Nigam, P.~Friederich, and A.~Aspuru{-}Guzik,
  ``{SELFIES:} a robust representation of semantically constrained graphs with
  an example application in chemistry,'' {\em CoRR}, vol.~abs/1905.13741, 2019.

\bibitem{lds032}
H.~Modjtahedi, S.~Ali, and S.~Essapen, ``{Therapeutic application of monoclonal
  antibodies in cancer: advances and challenges},'' {\em British Medical
  Bulletin}, vol.~104, pp.~41--59, 10 2012.

\bibitem{10665-338096}
W.~H. Organization {\em et~al.}, ``mrna vaccines against covid-19:
  Pfizer-biontech covid-19 vaccine bnt162b2: prepared by the strategic advisory
  group of experts (sage) on immunization working group on covid-19 vaccines,
  22 december 2020,'' tech. rep., World Health Organization, 2020.

\bibitem{CompleteCharacterization}
D.~Cohen, M.~Cooper, and P.~Jeavons, ``A complete characterization of
  complexity for boolean constraint optimization problems,'' in {\em
  Proceedings of CP'04}, no.~3258 in Lecture Notes in Computer Science,
  pp.~212--226, 2004.

\bibitem{cook1971}
S.~A. Cook, ``The complexity of theorem proving procedures,'' in {\em
  Proceedings of the {T}hird {A}nnual {ACM} {S}ymposium}, (New York),
  pp.~151--158, ACM, 1971.

\bibitem{smtintro}
L.~De~Moura and N.~Bj\o{}rner, ``Satisfiability modulo theories: Introduction
  and applications,'' {\em Commun. ACM}, vol.~54, p.~69–77, sep 2011.

\bibitem{Sturm2016SC2SC}
E.~{\'A}brah{\'a}m, J.~Abbott, B.~Becker, A.~M. Bigatti, M.~Brain,
  B.~Buchberger, A.~Cimatti, J.~H. Davenport, M.~England, P.~Fontaine, {\em
  et~al.}, ``Sc2: Satisfiability checking meets symbolic computation,'' {\em
  Intelligent Computer Mathematics: Proceedings CICM}, vol.~9791, pp.~28--43,
  2016.

\bibitem{smtcad}
E.~{\'{A}}brah{\'{a}}m, J.~H. Davenport, M.~England, and G.~Kremer, ``Deciding
  the consistency of non-linear real arithmetic constraints with a conflict
  driven search using cylindrical algebraic coverings,'' {\em CoRR},
  vol.~abs/2003.05633, 2020.

\bibitem{9700299}
G.~Kremer, E.~Ábrahám, M.~England, and J.~H. Davenport, ``On the
  implementation of cylindrical algebraic coverings for satisfiability modulo
  theories solving,'' in {\em 2021 23rd International Symposium on Symbolic and
  Numeric Algorithms for Scientific Computing (SYNASC)}, pp.~37--39, 2021.

\bibitem{CowenRivers2018SummerRR}
A.~I. Cowen-Rivers and M.~England, ``Summer research report: Towards
  incremental lazard cylindrical algebraic decomposition,'' {\em ArXiv},
  vol.~abs/1804.08564, 2018.

\bibitem{Collins}
G.~E. Collins, ``Quantifier elimination for real closed fields by cylindrical
  algebraic decomposition--preliminary report,'' {\em SIGSAM Bull.}, vol.~8,
  p.~80–90, aug 1974.

\bibitem{DAVENPORT198829}
J.~H. Davenport and J.~Heintz, ``Real quantifier elimination is doubly
  exponential,'' {\em Journal of Symbolic Computation}, vol.~5, no.~1,
  pp.~29--35, 1988.

\bibitem{HEINTZ1983239}
J.~Heintz, ``Definability and fast quantifier elimination in algebraically
  closed fields,'' {\em Theoretical Computer Science}, vol.~24, no.~3,
  pp.~239--277, 1983.

\bibitem{Punn07}
A.~P. Punnen, {\em The Traveling Salesman Problem: Applications, Formulations
  and Variations}, pp.~1--28.
\newblock Boston, MA, USA: Springer, 2007.

\bibitem{BelloPLNB16}
I.~Bello, H.~Pham, Q.~V. Le, M.~Norouzi, and S.~Bengio, ``Neural combinatorial
  optimization with reinforcement learning,'' {\em CoRR}, vol.~abs/1611.09940,
  2016.

\bibitem{silver2016mastering}
D.~Silver, A.~Huang, C.~J. Maddison, A.~Guez, L.~Sifre, G.~Van Den~Driessche,
  J.~Schrittwieser, I.~Antonoglou, V.~Panneershelvam, M.~Lanctot, {\em et~al.},
  ``Mastering the game of go with deep neural networks and tree search,'' {\em
  Nature}, vol.~529, no.~7587, pp.~484--489, 2016.

\bibitem{degrave2022magnetic}
J.~Degrave, F.~Felici, J.~Buchli, M.~Neunert, B.~Tracey, F.~Carpanese,
  T.~Ewalds, R.~Hafner, A.~Abdolmaleki, D.~de~Las~Casas, {\em et~al.},
  ``Magnetic control of tokamak plasmas through deep reinforcement learning,''
  {\em Nature}, vol.~602, no.~7897, pp.~414--419, 2022.

\bibitem{ml4co}
M.~Gasse, Q.~Cappart, J.~Charfreitag, L.~Charlin, D.~Chételat, A.~Chmiela,
  J.~Dumouchelle, A.~Gleixner, A.~M. Kazachkov, E.~Khalil, P.~Lichocki,
  A.~Lodi, M.~Lubin, C.~J. Maddison, C.~Morris, D.~J. Papageorgiou,
  A.~Parjadis, S.~Pokutta, A.~Prouvost, L.~Scavuzzo, G.~Zarpellon, L.~Yang,
  S.~Lai, A.~Wang, X.~Luo, X.~Zhou, H.~Huang, S.~Shao, Y.~Zhu, D.~Zhang,
  T.~Quan, Z.~Cao, Y.~Xu, Z.~Huang, S.~Zhou, C.~Binbin, H.~Minggui, H.~Hao,
  Z.~Zhiyu, A.~Zhiwu, and M.~Kun, ``The machine learning for combinatorial
  optimization competition (ml4co): Results and insights,'' 2022.

\bibitem{DBLP:journals/corr/abs-2102-09663}
M.~Qi, M.~Wang, and Z.~Shen, ``Smart feasibility pump: Reinforcement learning
  for (mixed) integer programming,'' {\em CoRR}, vol.~abs/2102.09663, 2021.

\bibitem{grosnit2021lsbo}
A.~Grosnit, R.~Tutunov, A.~M. Maraval, R.~Griffiths, A.~I. Cowen{-}Rivers,
  L.~Yang, L.~Zhu, W.~Lyu, Z.~Chen, J.~Wang, J.~Peters, and H.~Bou{-}Ammar,
  ``High-dimensional bayesian optimisation with variational autoencoders and
  deep metric learning,'' {\em CoRR}, vol.~abs/2106.03609, 2021.

\bibitem{griffiths2020constrained}
R.-R. Griffiths and J.~M. Hern{\'a}ndez-Lobato, ``Constrained bayesian
  optimization for automatic chemical design using variational autoencoders,''
  {\em Chemical science}, vol.~11, no.~2, pp.~577--586, 2020.

\bibitem{gomez2018automatic}
R.~G{\'o}mez-Bombarelli, J.~N. Wei, D.~Duvenaud, J.~M. Hern{\'a}ndez-Lobato,
  B.~S{\'a}nchez-Lengeling, D.~Sheberla, J.~Aguilera-Iparraguirre, T.~D.
  Hirzel, R.~P. Adams, and A.~Aspuru-Guzik, ``Automatic chemical design using a
  data-driven continuous representation of molecules,'' {\em ACS central
  science}, vol.~4, no.~2, pp.~268--276, 2018.

\bibitem{deshwal2021combining}
A.~Deshwal and J.~Doppa, ``Combining latent space and structured kernels for
  bayesian optimization over combinatorial spaces,'' {\em Advances in Neural
  Information Processing Systems}, vol.~34, 2021.

\bibitem{maus2022local}
N.~Maus, H.~T. Jones, J.~S. Moore, M.~J. Kusner, J.~Bradshaw, and J.~R.
  Gardner, ``Local latent space bayesian optimization over structured inputs,''
  {\em arXiv preprint arXiv:2201.11872}, 2022.

\bibitem{daulton2021multi}
S.~Daulton, D.~Eriksson, M.~Balandat, and E.~Bakshy, ``Multi-objective bayesian
  optimization over high-dimensional search spaces,'' {\em arXiv preprint
  arXiv:2109.10964}, 2021.

\bibitem{tripp2020sample}
A.~Tripp, E.~Daxberger, and J.~M. Hern{\'a}ndez-Lobato, ``Sample-efficient
  optimization in the latent space of deep generative models via weighted
  retraining,'' {\em Advances in Neural Information Processing Systems},
  vol.~33, pp.~11259--11272, 2020.

\bibitem{ma2019combinatorial}
Q.~Ma, S.~Ge, D.~He, D.~Thaker, and I.~Drori, ``Combinatorial optimization by
  graph pointer networks and hierarchical reinforcement learning,'' {\em arXiv
  preprint arXiv:1911.04936}, 2019.

\bibitem{cappart2021combinatorial}
Q.~Cappart, D.~Ch{\'e}telat, E.~Khalil, A.~Lodi, C.~Morris, and
  P.~Veli{\v{c}}kovi{\'c}, ``Combinatorial optimization and reasoning with
  graph neural networks,'' {\em arXiv preprint arXiv:2102.09544}, 2021.

\bibitem{boffa2022neural}
M.~Boffa, Z.~B. Houidi, J.~Krolikowski, and D.~Rossi, ``Neural combinatorial
  optimization beyond the tsp: Existing architectures under-represent graph
  structure,'' {\em arXiv preprint arXiv:2201.00668}, 2022.

\bibitem{selsam2018learning}
D.~Selsam, M.~Lamm, B.~B{\"u}nz, P.~Liang, L.~de~Moura, and D.~L. Dill,
  ``Learning a sat solver from single-bit supervision,'' {\em arXiv preprint
  arXiv:1802.03685}, 2018.

\bibitem{NIPS2014_a14ac55a}
I.~Sutskever, O.~Vinyals, and Q.~V. Le, ``Sequence to sequence learning with
  neural networks,'' in {\em Advances in Neural Information Processing Systems}
  (Z.~Ghahramani, M.~Welling, C.~Cortes, N.~Lawrence, and K.~Q. Weinberger,
  eds.), vol.~27, Curran Associates, Inc., 2014.

\bibitem{mazyavkina2021reinforcement}
N.~Mazyavkina, S.~Sviridov, S.~Ivanov, and E.~Burnaev, ``Reinforcement learning
  for combinatorial optimization: A survey,'' {\em Computers \& Operations
  Research}, vol.~134, p.~105400, 2021.

\bibitem{norman2020computational}
R.~A. Norman, F.~Ambrosetti, A.~M. Bonvin, L.~J. Colwell, S.~Kelm, S.~Kumar,
  and K.~Krawczyk, ``Computational approaches to therapeutic antibody design:
  Established methods and emerging trends,'' {\em Briefings in bioinformatics},
  vol.~21, no.~5, pp.~1549--1567, 2020.

\bibitem{akbar2021progress}
R.~Akbar, H.~Bashour, P.~Rawat, P.~A. Robert, E.~Smorodina, T.-S. Cotet, F.-K.
  Karine, R.~Frank, B.~B. Mehta, M.~H. Vu, T.~Zengin, J.~Gutierrez-Marcos,
  F.~Lund-Johansen, J.~T. Andersen, and V.~Greiff, ``Progress and challenges
  for the machine learning-based design of fit-for-purpose monoclonal
  antibodies,'' {\em mAbs}, 2022.

\bibitem{fiser2003modeller}
A.~Fiser and A.~{\v{S}}ali, ``Modeller: Generation and refinement of
  homology-based protein structure models,'' {\em Methods in enzymology},
  vol.~374, pp.~461--491, 2003.

\bibitem{almagro2014second}
J.~C. Almagro, A.~Teplyakov, J.~Luo, R.~W. Sweet, S.~Kodangattil,
  F.~Hernandez-Guzman, and G.~L. Gilliland, ``Second antibody modeling
  assessment (ama-ii),'' 2014.

\bibitem{leem2016abodybuilder}
J.~Leem, J.~Dunbar, G.~Georges, J.~Shi, and C.~M. Deane, ``Abodybuilder:
  Automated antibody structure prediction with data--driven accuracy
  estimation,'' {\em MAbs}, vol.~8, no.~7, pp.~1259--1268, 2016.

\bibitem{brenke2012application}
R.~Brenke, D.~R. Hall, G.-Y. Chuang, S.~R. Comeau, T.~Bohnuud, D.~Beglov,
  O.~Schueler-Furman, S.~Vajda, and D.~Kozakov, ``Application of asymmetric
  statistical potentials to antibody--protein docking,'' {\em Bioinformatics},
  vol.~28, no.~20, pp.~2608--2614, 2012.

\bibitem{sircar2010snugdock}
A.~Sircar and J.~J. Gray, ``Snugdock: Paratope structural optimization during
  antibody-antigen docking compensates for errors in antibody homology
  models,'' {\em PloS computational biology}, vol.~6, no.~1, p.~e1000644, 2010.

\bibitem{morea2000antibody}
V.~Morea, A.~M. Lesk, and A.~Tramontano, ``Antibody modeling: Implications for
  engineering and design,'' {\em Methods}, vol.~20, no.~3, pp.~267--279, 2000.

\bibitem{clark2006affinity}
L.~A. Clark, P.~A. Boriack-Sjodin, J.~Eldredge, C.~Fitch, B.~Friedman, K.~J.
  Hanf, M.~Jarpe, S.~F. Liparoto, Y.~Li, A.~Lugovskoy, {\em et~al.}, ``Affinity
  enhancement of an in vivo matured therapeutic antibody using structure-based
  computational design,'' {\em Protein science}, vol.~15, no.~5, pp.~949--960,
  2006.

\bibitem{clark2009antibody}
L.~A. Clark, P.~A. Boriack-Sjodin, E.~Day, J.~Eldredge, C.~Fitch, M.~Jarpe,
  S.~Miller, Y.~Li, K.~Simon, and H.~W. Van~Vlijmen, ``An antibody loop
  replacement design feasibility study and a loop-swapped dimer structure,''
  {\em Protein Engineering, Design \& Selection}, vol.~22, no.~2, pp.~93--101,
  2009.

\bibitem{nimrod2018computational}
G.~Nimrod, S.~Fischman, M.~Austin, A.~Herman, F.~Keyes, O.~Leiderman,
  D.~Hargreaves, M.~Strajbl, J.~Breed, S.~Klompus, {\em et~al.},
  ``Computational design of epitope-specific functional antibodies,'' {\em Cell
  reports}, vol.~25, no.~8, pp.~2121--2131, 2018.

\bibitem{amimeur2020designing}
T.~Amimeur, J.~M. Shaver, R.~R. Ketchem, J.~A. Taylor, R.~H. Clark, J.~Smith,
  D.~Van~Citters, C.~C. Siska, P.~Smidt, M.~Sprague, {\em et~al.}, ``Designing
  feature-controlled humanoid antibody discovery libraries using generative
  adversarial networks,'' {\em BioRxiv}, 2020.

\bibitem{eguchi2020ig}
R.~R. Eguchi, N.~Anand, C.~A. Choe, and P.-S. Huang, ``Ig-vae: Generative
  modeling of immunoglobulin proteins by direct 3d coordinate generation,''
  {\em bioRxiv}, 2020.

\bibitem{shin2021protein}
J.-E. Shin, A.~J. Riesselman, A.~W. Kollasch, C.~McMahon, E.~Simon, C.~Sander,
  A.~Manglik, A.~C. Kruse, and D.~S. Marks, ``Protein design and variant
  prediction using autoregressive generative models,'' {\em Nature
  communications}, vol.~12, no.~1, pp.~1--11, 2021.

\bibitem{akbar2021silico}
R.~Akbar, P.~A. Robert, C.~R. Weber, M.~Widrich, R.~Frank, M.~Pavlovi{\'c},
  L.~Scheffer, M.~Chernigovskaya, I.~Snapkov, A.~Slabodkin, {\em et~al.}, ``In
  silico proof of principle of machine learning-based antibody design at
  unconstrained scale,'' {\em BioRXiV}, 2021.

\bibitem{shuai2021generative}
R.~W. Shuai, J.~A. Ruffolo, and J.~J. Gray, ``Generative language modeling for
  antibody design,'' {\em bioRxiv}, 2021.

\bibitem{leem2021deciphering}
J.~Leem, L.~S. Mitchell, J.~H. Farmery, J.~Barton, and J.~D. Galson,
  ``Deciphering the language of antibodies using self-supervised learning,''
  {\em bioRxiv}, 2021.

\bibitem{jin2021iterative}
W.~Jin, J.~Wohlwend, R.~Barzilay, and T.~Jaakkola, ``Iterative refinement graph
  neural network for antibody sequence-structure co-design,'' {\em arXiv
  preprint arXiv:2110.04624}, 2021.

\bibitem{robert2021ymir}
P.~A. Robert, T.~Arulraj, and M.~Meyer-Hermann, ``Ymir: A 3d structural
  affinity model for multi-epitope vaccine simulations,'' {\em Iscience},
  vol.~24, no.~9, p.~102979, 2021.

\bibitem{robert2021one}
P.~A. Robert, R.~Akbar, R.~Frank, M.~Pavlovi{\'c}, M.~Widrich, I.~Snapkov,
  M.~Chernigovskaya, L.~Scheffer, A.~Slabodkin, B.~B. Mehta, {\em et~al.},
  ``One billion synthetic 3d-antibody-antigen complexes enable unconstrained
  machine-learning formalized investigation of antibody specificity
  prediction,'' {\em BioRXiV}, 2021.

\bibitem{narayanan2021machine}
H.~Narayanan, F.~Dingfelder, A.~Butt{\'e}, N.~Lorenzen, M.~Sokolov, and
  P.~Arosio, ``Machine learning for biologics: Opportunities for protein
  engineering, developability, and formulation,'' {\em Trends in
  pharmacological sciences}, 2021.

\bibitem{laustsen2021animal}
A.~H. Laustsen, V.~Greiff, A.~Karatt-Vellatt, S.~Muyldermans, and T.~P.
  Jenkins, ``Animal immunization, in vitro display technologies, and machine
  learning for antibody discovery,'' {\em Trends in Biotechnology}, 2021.

\bibitem{watkins1992q}
C.~J. Watkins and P.~Dayan, ``Q-learning,'' {\em Machine learning}, vol.~8,
  no.~3, pp.~279--292, 1992.

\bibitem{vaswani2017attention}
A.~Vaswani, N.~Shazeer, N.~Parmar, J.~Uszkoreit, L.~Jones, A.~N. Gomez,
  {\L}.~Kaiser, and I.~Polosukhin, ``Attention is all you need,'' {\em Advances
  in neural information processing systems}, vol.~30, 2017.

\bibitem{brown2020language}
T.~Brown, B.~Mann, N.~Ryder, M.~Subbiah, J.~D. Kaplan, P.~Dhariwal,
  A.~Neelakantan, P.~Shyam, G.~Sastry, A.~Askell, {\em et~al.}, ``Language
  models are few-shot learners,'' {\em Advances in neural information
  processing systems}, vol.~33, pp.~1877--1901, 2020.

\bibitem{devlin2018bert}
J.~Devlin, M.-W. Chang, K.~Lee, and K.~Toutanova, ``Bert: Pre-training of deep
  bidirectional transformers for language understanding,'' {\em arXiv preprint
  arXiv:1810.04805}, 2018.

\bibitem{liu2019roberta}
Y.~Liu, M.~Ott, N.~Goyal, J.~Du, M.~Joshi, D.~Chen, O.~Levy, M.~Lewis,
  L.~Zettlemoyer, and V.~Stoyanov, ``Roberta: A robustly optimized bert
  pretraining approach,'' {\em arXiv preprint arXiv:1907.11692}, 2019.

\bibitem{Chothia1987}
C.~Chothia and A.~M. Lesk, ``Canonical structures for the hypervariable regions
  of immunoglobulins,'' {\em Journal of Molecular Biology}, vol.~196,
  pp.~901--917, Aug. 1987.

\bibitem{xu2000diversity}
J.~L. Xu and M.~M. Davis, ``Diversity in the cdr3 region of vh is sufficient
  for most antibody specificities,'' {\em Immunity}, vol.~13, no.~1,
  pp.~37--45, 2000.

\bibitem{bertsimas1993simulated}
D.~Bertsimas and J.~Tsitsiklis, ``Simulated annealing,'' {\em Statistical
  science}, vol.~8, no.~1, pp.~10--15, 1993.

\bibitem{kingma2014adam}
D.~P. Kingma and J.~Ba, ``Adam: A method for stochastic optimization,'' {\em
  arXiv preprint arXiv:1412.6980}, 2014.

\bibitem{urbina2022dual}
F.~Urbina, F.~Lentzos, C.~Invernizzi, and S.~Ekins, ``Dual use of
  artificial-intelligence-powered drug discovery,'' {\em Nature Machine
  Intelligence}, vol.~4, no.~3, pp.~189--191, 2022.

\end{thebibliography}

\section*{Checklist}


\begin{enumerate}

\item For all authors...
\begin{enumerate}
  \item Do the main claims made in the abstract and introduction accurately reflect the paper's contributions and scope?
    \answerYes{}
  \item Did you describe the limitations of your work?
    \answerYes{We discuss limitation of method and experiments in experimental section.}
  \item Did you discuss any potential negative societal impacts of your work?
    \answerYes{}
  \item Have you read the ethics review guidelines and ensured that your paper conforms to them?
    \answerYes{}
\end{enumerate}

\item If you are including theoretical results...
\begin{enumerate}
  \item Did you state the full set of assumptions of all theoretical results?
    \answerYes{}
        \item Did you include complete proofs of all theoretical results?
    \answerYes{}
\end{enumerate}

\item If you ran experiments...
\begin{enumerate}
  \item Did you include the code, data, and instructions needed to reproduce the main experimental results (either in the supplemental material or as a URL)?
    \answerYes{Code and a script included.}
  \item Did you specify all the training details (e.g., data splits, hyperparameters, how they were chosen)?
    \answerYes{In experiment section and appendix.}
        \item Did you report error bars (e.g., with respect to the random seed after running experiments multiple times)?
    \answerYes{}
        \item Did you include the total amount of compute and the type of resources used (e.g., type of GPUs, internal cluster, or cloud provider)?
    \answerYes{}
\end{enumerate}

\item If you are using existing assets (e.g., code, data, models) or curating/releasing new assets...
\begin{enumerate}
  \item If your work uses existing assets, did you cite the creators?
    \answerYes{}
  \item Did you mention the license of the assets?
    \answerNA{}
  \item Did you include any new assets either in the supplemental material or as a URL?
    \answerYes{}
  \item Did you discuss whether and how consent was obtained from people whose data you're using/curating?
    \answerNA{}
  \item Did you discuss whether the data you are using/curating contains personally identifiable information or offensive content?
    \answerNA{}
\end{enumerate}

\item If you used crowdsourcing or conducted research with human subjects...
\begin{enumerate}
  \item Did you include the full text of instructions given to participants and screenshots, if applicable?
    \answerNA{}
  \item Did you describe any potential participant risks, with links to Institutional Review Board (IRB) approvals, if applicable?
    \answerNA{}
  \item Did you include the estimated hourly wage paid to participants and the total amount spent on participant compensation?
    \answerNA{}
\end{enumerate}

\end{enumerate}

\setcounter{equation}{0}
\setcounter{thm}{0}
\setcounter{defn}{0}
\setcounter{figure}{0}
\setcounter{table}{0}
\setcounter{section}{0}

\renewcommand{\thethm}{A\arabic{thm}}
\renewcommand{\thedefn}{A\arabic{defn}}
\renewcommand{\theprop}{A\arabic{prop}}
\renewcommand{\thelem}{A\arabic{lem}}
\renewcommand{\thesection}{A\arabic{section}}

\renewcommand{\theequation}{A\arabic{equation}}
\renewcommand{\thetable}{A\arabic{table}}
\renewcommand{\thefigure}{A\arabic{figure}}
\renewcommand\thesubfigure{(\alph{subfigure})}

\renewcommand\ptctitle{}
\clearpage
\appendix 
\renewcommand{\partname}{Appendices}
\renewcommand{\thepart}{}
\doparttoc
\part{} 
\parttoc
\section{Ethical Considerations \& Potential Negative Societal Impacts}
It is well known that the methods developed for molecular optimization of therapeutic targets, also have the dual use to optimise for toxic targets~\cite{urbina2022dual}. Although domain knowledge is still required in chemistry/ toxicology in order to develop harmful molecules, we must continue to raise awareness of the dual use in order to further engage the community in responsible science.

In Appendix~\ref{app:breakdown} we present additional experiment results. In Appendix~\ref{app:sql} we present the full algorithm for SQL alongside additional details. In Appendix~\ref{app:VAMP} we present further details of the variable allocation markov decision process (VAMP). Lastly, in Appendix~\ref{sec:spg} we will cover the implementation details of the  Structured Policy Gradients baselines.

\section{Additional Experimentation}\label{app:breakdown}

\begin{figure}[h!]
    \centering
    \includegraphics[width=0.3\linewidth]{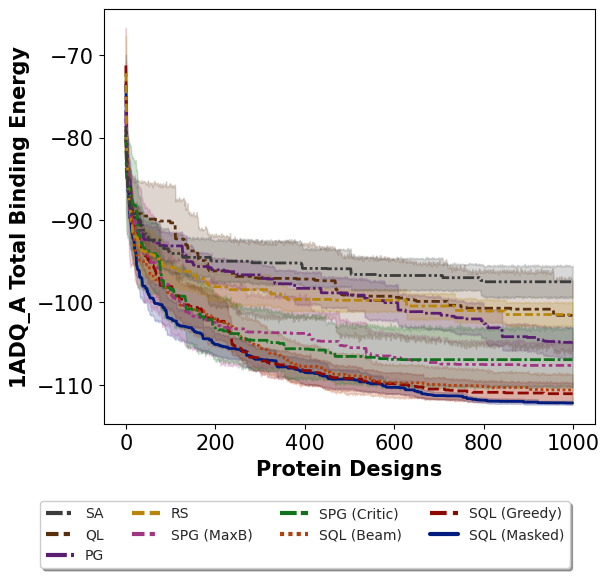}
    \includegraphics[width=0.3\linewidth]{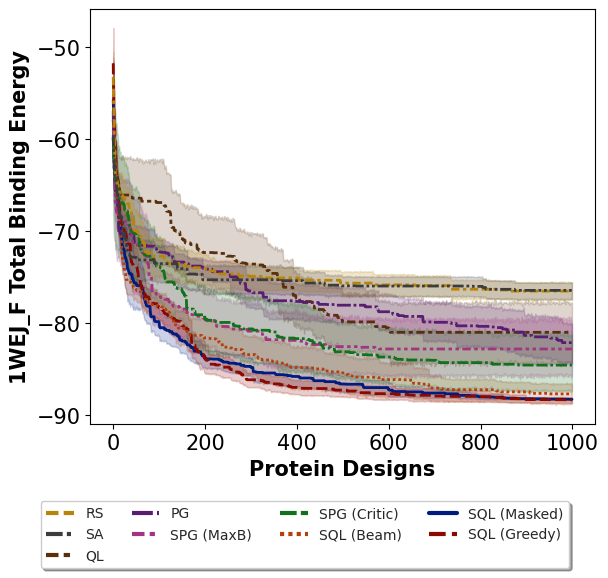}
    \includegraphics[width=0.3\linewidth]{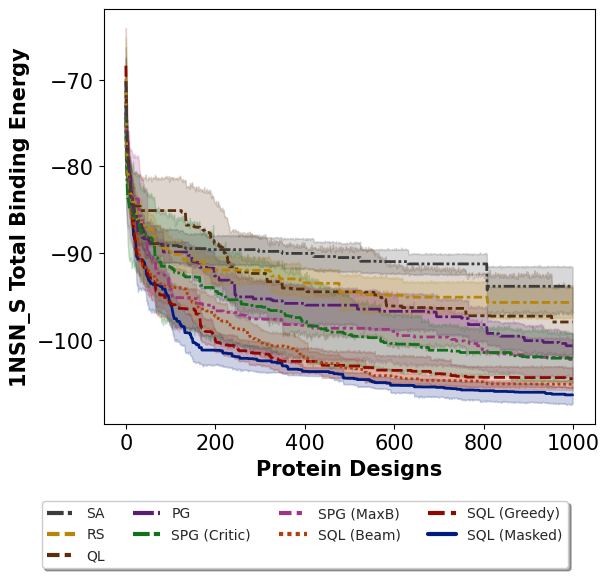}
    \includegraphics[width=0.3\linewidth]{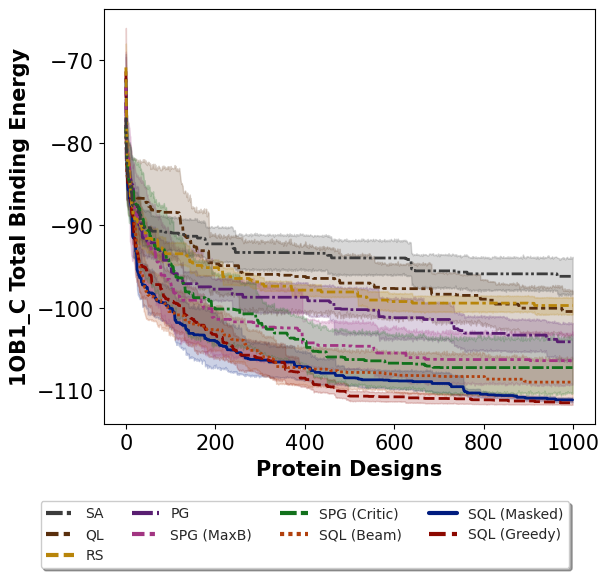}
    \includegraphics[width=0.3\linewidth]{figures/general/PDB_2DD8_Chain_S_core_legend_True.png}
    \includegraphics[width=0.3\linewidth]{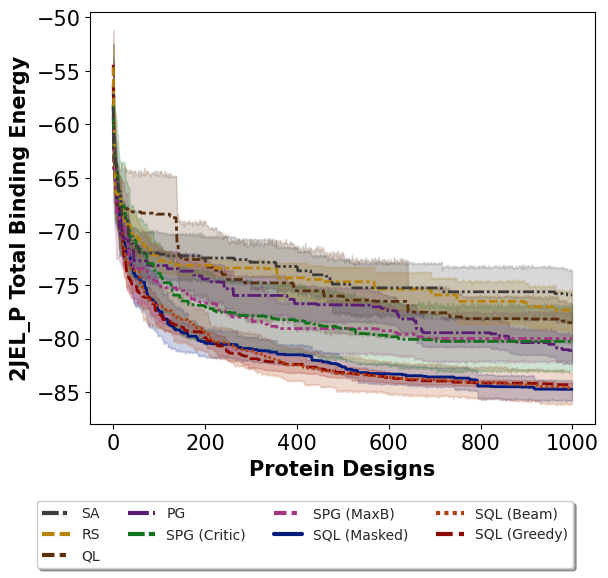}
        \includegraphics[width=0.3\linewidth]{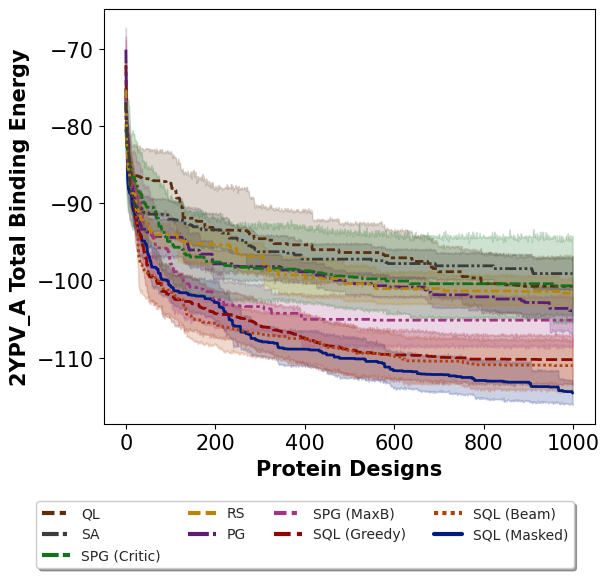}
    \includegraphics[width=0.3\linewidth]{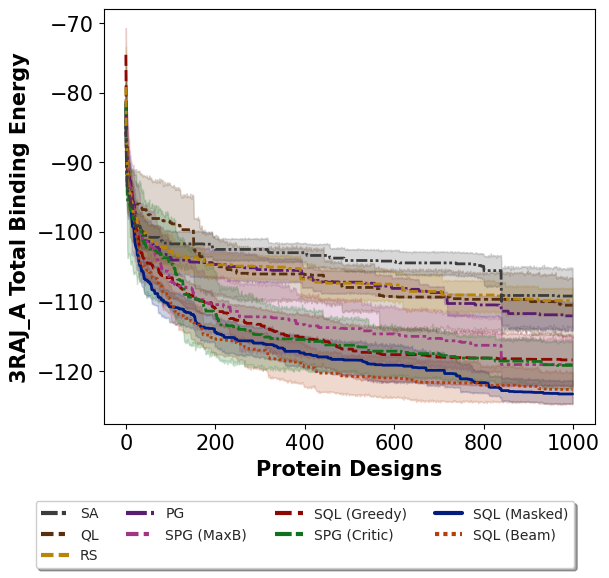}
    \caption{Convergence plots where the x axis indicated number of protein designs evaluated, and the y axis indicates the best energy (lower is better). We observe that across all tasks, SQL variants remain dominant, specifically SQL Masked which generated sequences non-sequentially performs best with the lowest variance.}
    \label{fig:pdb_all}
\end{figure}

In Table~\ref{fig:table_var} we report the standard deviation of all 10 seeds at various stages of optimisation for each task. In Figure~\ref{fig:pdb_all} we show line-plots for each task with 95\% confidence intervals.

\begin{figure}[ht!]
    \centering
    \includegraphics[width=\linewidth]{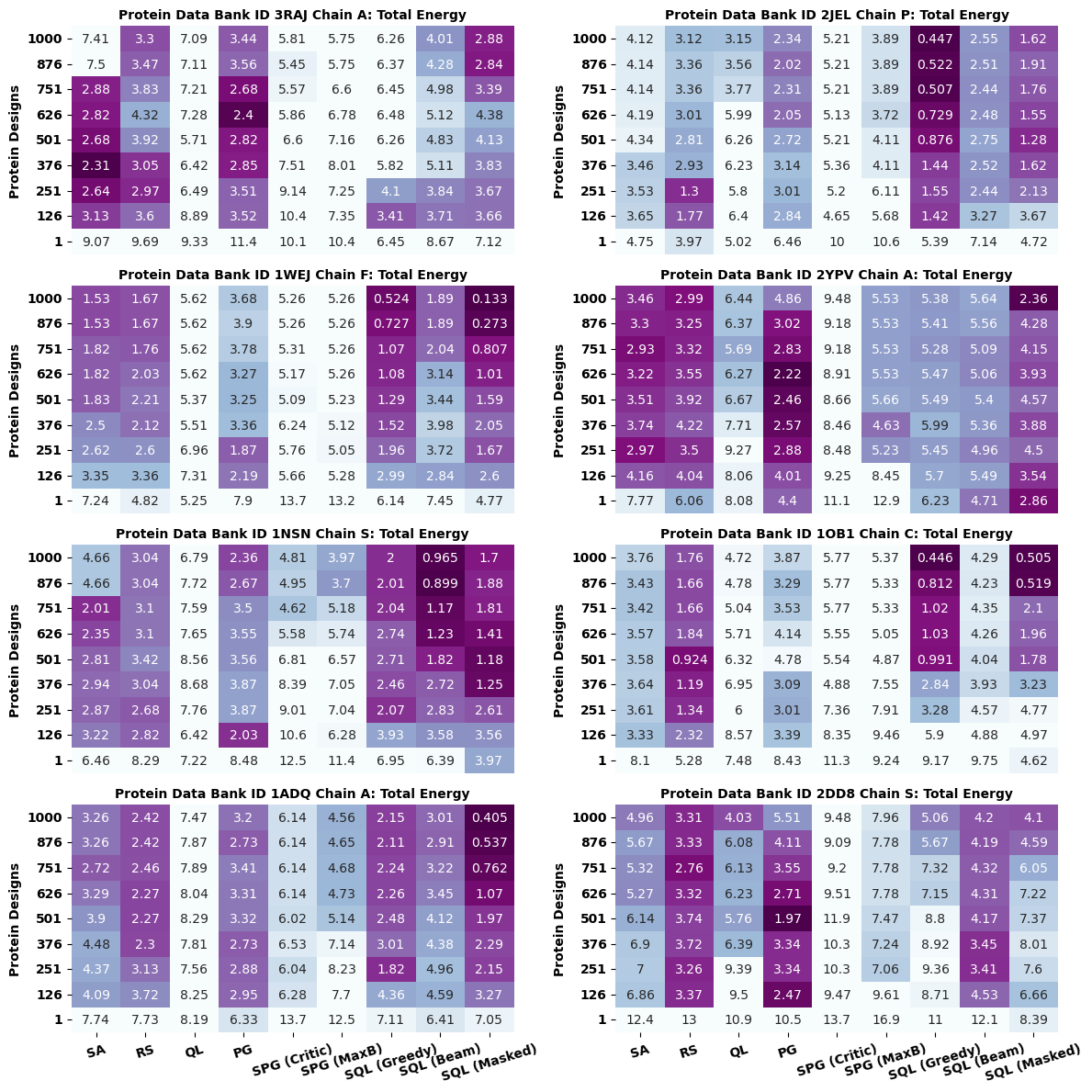}
    \caption{Plot of standard deviations for same intervals as \ref{fig:combtasks}}
    \label{fig:table_var}
\end{figure}

In Figure~\ref{fig:msa} we take the top sequences on PDB 1ADQ, Chain A protein optimization task. All but three of 300 sequences found with best energy scores $(-112.59)$ came from SQL, with two sequences from SPG and one from QL. We use multiple sequence alignment on sequences obtained from QL, SPG and SQL. 

\begin{figure}[th!]
    \centering
    \includegraphics[width=\linewidth,trim={1cm 1cm 3.1cm 0cm},clip]{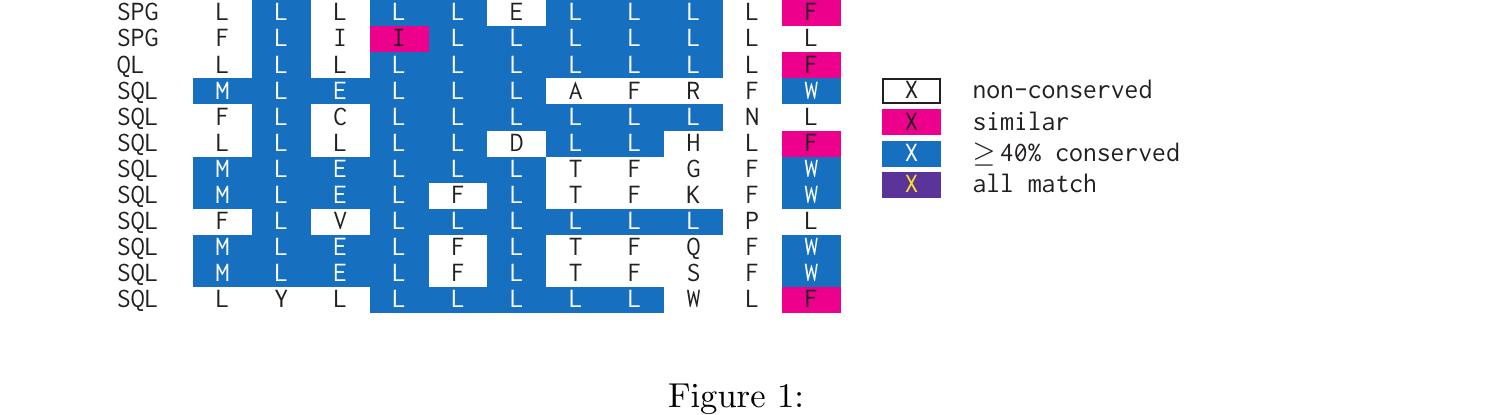}
    \caption{Multiple sequence alignment (MSA) analysis of top proteins found by all agents for the PDB 1ADQ Chain A interaction energy task. The only method that found these energy scoring proteins was SQL variants. Overall for PDB 1ADQ Chain A, SQL variants found over 300 unique proteins with the same objective function value (total energy) of $-112.59$. This MSA plot highlights the complexity of the problem and how so many, related, local minima exist.}
    \label{fig:msa}
\end{figure}

\section{Additional Algorithmic Details of SQL}\label{app:sql}
In this section we will give additional details of SQL. In Algorithm~\ref{alg:sql} we show the high level algorithm for SQL, which follows the same skeleton as Q-learning (also general off policy RL), however at each step is adapted for combinatorial optimization thus instead of outputting individual actions, each step outputs a full structure (e.g. a sequence of actions).

\begin{algorithm}[ht!]
\caption{Structured Q-learning}\label{alg:sql}
$\text{Set training steps } N$, $\text{ Set min buffer size } B$
$\text{ , \text{StructBuffer}} \gets []$\;
\For{i $\text{in range}(N)$}{
    \eIf{len(\text{StructBuffer}) $\leq B$}{
        $\text{Sample, eval and store } \{\bms^{(i)},f(\bms^{(i)})\}_{i=0}^B$ 
    }{
        $\text{Train structure critics e.g. } \mathcal{S}(\cdot)$ \;
        $\bms^{\ast} = \arg \max_{\bms} \mathcal{S}(\bms)$ \;
        $\text{Get rand structure } \hat{\bms} \gets \Phi(\text{StructBuffer})$ \; 
        \eIf{$p(\textbf{accept}=\bms^{\ast})$  }{
            $\text{Exploit by evaluating structure } \bms^\ast \text{, add objective value } f(\bms^\ast) \text{ to } \text{StructBuffer}$
        }{
            $\text{Explore by evaluating structure } \hat{\bms} \text{, add objective value } f(\hat{\bms}) \text{ to } \text{StructBuffer}$
        }
    }
}
\end{algorithm}

\subsection{Neural Architectures}\label{app:arcs}
We will now give details of the neural architectures used. We use an embedding with 32 units to embed the amino acid's in proteins. We additionally use positional encoding on the embeddings. After a transformer is applied we feed this into an MLP in order to predict the logits (for PG/SPG) and targets (For QL and SQL) required for selecting next substructure. For all methods we use one transformer encoder block with embedding size 32, multi-head attention with 8 heads, 64 feed-forward units, layer norm, ReLU activation's and dropout probability 0.1. All methods therefore have the exact same number of parameters, making for fair algorithmic comparison. Both QL and SQL have an initial random exploration set to 32.

\subsubsection{Variable Size Structures}\label{app:vss}

For variable length sequences, we simply add an additional end of sequence action $[\textbf{EOS}]$ to actions which would stop generation and finalise that this is a complete structure ready for evaluation. For variable size structures we must also modify the reward function, such that it evaluates the structure in the objective function if the end of structure token $\textbf{EOS}$ is present.

\subsection{Variable Size Structure Exploration}\label{app:opp}
If one is dealing with variable length structures then we can add two additional operations \textit{shrink} \& \textit{expand}, choosing first which operation to apply, then applying the operation to the sampled structure $\bms^{(i)}$. Where \textit{shrink} deletes a substructure and expand adds a uniformly sampled substructure, at a uniformly chosen position in the structure. The other, more standard way to acquire a random structure, would be to just a random uniform set of variables. 

\section{Further Details of VAMP}\label{app:VAMP}

We will now present the proof of Theorem 1. 




\begin{thm}\label{thrm:1proof}
    For $\gamma_{\text{CO}}=1$ and any objective function $f$, the optimal policy $\pi^\ast$ picks a solution $s^\ast$ such that $f(s^\ast)  = \arg \max_{s \in\mathbb{S}} f(s)$.
\end{thm}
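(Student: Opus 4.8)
The plan is to show that, inside the VAMP, the undiscounted return of an episode collapses to a single evaluation of $f$, so that maximising expected return is literally maximising $f$ over structures. First I would observe that, because $\gamma_{\text{CO}}=1$ and $\hat{r}(\bmo_t,\bma_t,\bmo_{t+1})=0$ for every $t<L-1$, the cumulative return along any length-$L$ trajectory $(\bmo_0,\bma_0,\dots,\bma_{L-1},\bmo_L)$ equals the single terminal reward $\hat{r}(\bmo_{L-1},\bma_{L-1},\bmo_L)=f([\bma_i]_{i=0}^{L-1})$. With no discounting and no intermediate reward, the return is exactly $f$ evaluated on the fully allocated structure.

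Next I would exploit the deterministic transition kernel to set up a bijection between action sequences and structures. Starting from the all-masked observation $\bmo_0=[\textbf{MASK}]_{i=0}^{T}$ with $T=L-1$, and applying $\hat{\cP}$ (whose every transition has probability $1$), any choice of actions $(\bma_0,\dots,\bma_{L-1})\in\cX^L$ deterministically produces the terminal observation $\bmo_L=[\bma_i]_{i=0}^{L-1}=\bms\in\mathbb{S}$; conversely, every $\bms\in\mathbb{S}=\cX^L$ is realised by a unique action sequence. Hence the map from action sequences (equivalently, from deterministic policies evaluated at $\bmo_0$) onto $\mathbb{S}$ is a bijection, and in particular every structure is reachable from the start state.

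Combining these two facts, for any deterministic policy $\pi$ inducing the structure $\bms_\pi$ we obtain $V^\pi(\bmo_0)=f(\bms_\pi)$. Since the MDP is deterministic, restricting attention to deterministic policies loses no optimality: a stochastic policy's value at $\bmo_0$ is a convex combination of the $f$-values of the structures it can generate, which is at most $\max_{\bms\in\mathbb{S}} f(\bms)$, with equality achieved by the deterministic policy that always selects the optimal action sequence. Therefore $V^\ast(\bmo_0)=\max_\pi V^\pi(\bmo_0)=\max_{\bms\in\mathbb{S}} f(\bms)$, and the optimal policy $\pi^\ast$ picks exactly the action sequence whose terminal observation is $\bms^\ast=\arg\max_{\bms\in\mathbb{S}} f(\bms)$, giving $f(\bms^\ast)=\max_{\bms\in\mathbb{S}} f(\bms)$ as claimed.

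I do not expect a genuine mathematical obstacle here; the argument is essentially forced by the construction. The only points needing care are bookkeeping rather than substance: reconciling the horizon index $T$ used in the start state with the $L-1$ appearing in $\hat{r}$ so that an episode contains exactly $L$ action steps, and spelling out the reduction to deterministic policies so that the $\arg\max$ over $\mathbb{S}$ is attained at a single structure rather than smeared across a distribution. I would also flag that the statement is most naturally read with $f(\bms^\ast)$ equated to $\max_{\bms\in\mathbb{S}} f(\bms)$ (the value), while $\bms^\ast$ itself is the $\arg\max$.
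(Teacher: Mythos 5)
Your proposal is correct and follows essentially the same route as the paper's own proof: the return collapses to the single terminal reward $f([\bma_i]_{i=0}^{L-1})$ since all intermediate rewards are zero and $\gamma_{\text{CO}}=1$, and the deterministic transitions identify action sequences with structures, so maximising expected return over policies is maximising $f$ over $\mathbb{S}$. The paper states this as a one-line chain of equalities; you merely make explicit the bijection between action sequences and structures and the reduction to deterministic policies, which the paper leaves implicit.
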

\begin{proof}
Using the identity $o_L=[a_i]_{0=1}^{L-1}=[x_i]_{i=0}^{L-1}=s$, we can show
$$ \pi^{\ast} = \arg \max_{\pi} \mathbb{E} \left[ \sum_{t=0}^{L-1} \gamma^t r(o_t,a_t,o_{t+1})  \right] = \arg \max_{\pi} \mathbb{E} \left[ f([ a_0, \ldots, a_{L-1}])  \right] = \arg \max_{s \in\mathbb{S}} f(s) = \bms^{\ast}
$$
\end{proof}


\section{Structured Policy Gradients}\label{sec:spg}

Structured Policy Gradients (SPG) aim to learn a policy $p(\pi)$ that will assign high probability to combinatorial variables that will improve the objective function, and low probabilities combinatorial variables that will not. As previously formulated in~\cite{BelloPLNB16,kool2018attention,NIPS2014_a14ac55a}, who define Structured Policy Gradients policy using the chain rule to factorize the combinatorial variables:

\begin{equation}\label{eq:fpolicy}
    \log p(s) = \log p([a_i]_{i=1}^L) = \sum_{i=1}^L \log p( a_i \mid [a_j]_{j=1}^i) 
\end{equation}

The authors of~\cite{kool2018attention} identify that for the TSP problem, it can be useful to set a greedy baseline $b(s)$. However, this greedy baseline equates to an additional objective function evaluation per parameter update. For many combinatorial optimization problems the objective function is extremely expensive, hence we are in need of a cheaper baseline. We propose to use the (cheaper) best objective function value observed so far as a greedy policy estimate. For some learning rate $\alpha_{k}$ at step $k$, we get the following update for policy parameters $\theta$;

\begin{equation}\label{eq:policyloss}
    \begin{array}{cc}
         \theta \Longleftarrow \theta + \alpha_k \cdot  \mathbb{E}_{\tau \sim \bm{\pi}_{\theta}} \left[ \nabla \log p(s) f(s) - b(s)\right] 
    \end{array}
\end{equation}

In our experiments, we compare SPG with both a learnt baseline (Critic), as done in ~\cite{BelloPLNB16} and greedy baseline (MaxB), as done in~\cite{kool2018attention}

\end{document}